\newcommand{\ours}{GIFT} % Define your algorithm name here
\newtheorem{lemma}{Lemma}
\title{\ours: Guided Importance-Aware Fine-Tuning for Diffusion Language Models}
\author{Guowei Xu$^\dagger$, Wenxin Xu$^\dagger$, Jiawang Zhao, Kaisheng Ma$^*$ \\
  Institute for Interdisciplinary Information Sciences \\
  Tsinghua University\\
 \textdagger Equal Contribution.\\
  *Corresponding Author: \texttt{kaisheng@tsinghua.edu.cn} }
\begin{document}
\maketitle
\begin{abstract}
Diffusion models have recently shown strong potential in language modeling, offering faster generation compared to traditional autoregressive approaches.
However, applying supervised fine-tuning (SFT) to diffusion models remains challenging, as they lack precise probability estimates at each denoising step.
While the diffusion mechanism enables the model to reason over entire sequences, it also makes the generation process less predictable and often inconsistent.
This highlights the importance of controlling key tokens that guide the direction of generation.
To address this issue, we propose \ours, an importance-aware finetuning method for diffusion language models, where tokens are assigned different importance weights based on their entropy.
Derived from diffusion theory, \ours\ delivers substantial gains: across diverse settings including different mainstream training datasets ranging from 1k to 10k in size, utilizing LoRA or full parameter fine-tuning, and training on base or instruct models, \ours\ consistently achieves superior overall performance compared to standard SFT on four widely used reasoning benchmarks (Sudoku, Countdown, GSM8K, and MATH-500).
\end{abstract}

\section{Introduction}
Recently, the landscape of large language modeling has been reshaped by the emergence of diffusion-based approaches ~\citep{nie2025largelanguagediffusionmodels,ou2025absorbingdiscretediffusionsecretly,yang2025mmadamultimodallargediffusion}, which offer a fundamentally different perspective on sequence generation compared to autoregressive (AR) methods ~\cite{gpt1,gpt2,gpt3}. Rather than relying on strictly sequential decoding, diffusion large language models (dLLMs) employ iterative refinement procedures that enable efficient parallel generation and open new possibilities for scaling language models. 

Currently, similar to AR models, the training paradigm of dLLMs is generally divided into three stages: pre-training, instruction tuning, and reinforcement learning (RL). Among them, the pre-training and instruction tuning stages adopt an SFT loss specifically derived for dLLMs~\citep{nie2025largelanguagediffusionmodels}, while the RL stage employs a loss estimated on the basis of the PPO algorithm~\citep{schulman2017proximalpolicyoptimizationalgorithms}  and tailored to the characteristics of dLLMs ~\citep{zhu2025llada15variancereducedpreference}. 

Specifically, when designing the SFT objective, \citealp{nie2025largelanguagediffusionmodels} model the generation process of dLLMs as a continuous-time diffusion process, where the diffusion timestep $t$ ranges within $[0,1]$. Here, $t=0$ denotes a completely original, non-masked sequence, and $t=1$ denotes a fully masked sequence. As illustrated in Figure~\ref{fig:pipeline}(a), let the original response of the model be $x_0$. At a given timestep $t$, each token is masked with probability $t$, resulting in a partially masked response $x_t$. The training objective of dLLMs is then to enable the model to reconstruct the masked parts of $x_0$ given the partially masked input $x_t$. Based on this formulation, the loss is defined as follows:

\begin{equation}
L = - \sum_i \mathbb{E}_t \left[ \mathbf{1}[x_t^i = \mathbf{M}] \, \frac{1}{t} \log p\big( x_0^i \mid x_t \big) \right]
\end{equation}

This design of the SFT loss is intuitive and aligns well with the underlying rationale of diffusion theory. However, it presents several critical limitations. Among them, one of the most prominent issues is that the loss implicitly assumes a uniform masking rate across all tokens, thereby treating each token as equally important throughout the diffusion process. This assumption overlooks the inherent heterogeneity in token significance: tokens that play a central role in planning and reasoning should arguably receive greater emphasis during training than other tokens. While the Dream series ~\citep{ye2025dream7bdiffusionlarge} employs Context-Adaptive Token-Level Noise Rescheduling to account for varying token weights, its loss formulation remains heuristic and lacks a theoretical foundation, with its effectiveness proved limited in our experiments.

To address this discrepancy, we introduce \ours, a guided importance-aware finetuning mechanism based on predictive uncertainty.
Specifically, we find the entropy of the model's predictive distribution over tokens serves as a reliable metric for token importance at the current training stage.
Tokens with higher entropy correspond to positions where the model exhibits greater uncertainty in generation.
By prioritizing these high-entropy tokens during training via an adaptive importance weighting strategy applied to the SFT loss, we can concentrate computation and importance on tokens where the model is currently more uncertain, thereby improving the overall training effectiveness.

\begin{figure*}[t]
    \centering
    \includegraphics[width=\textwidth]{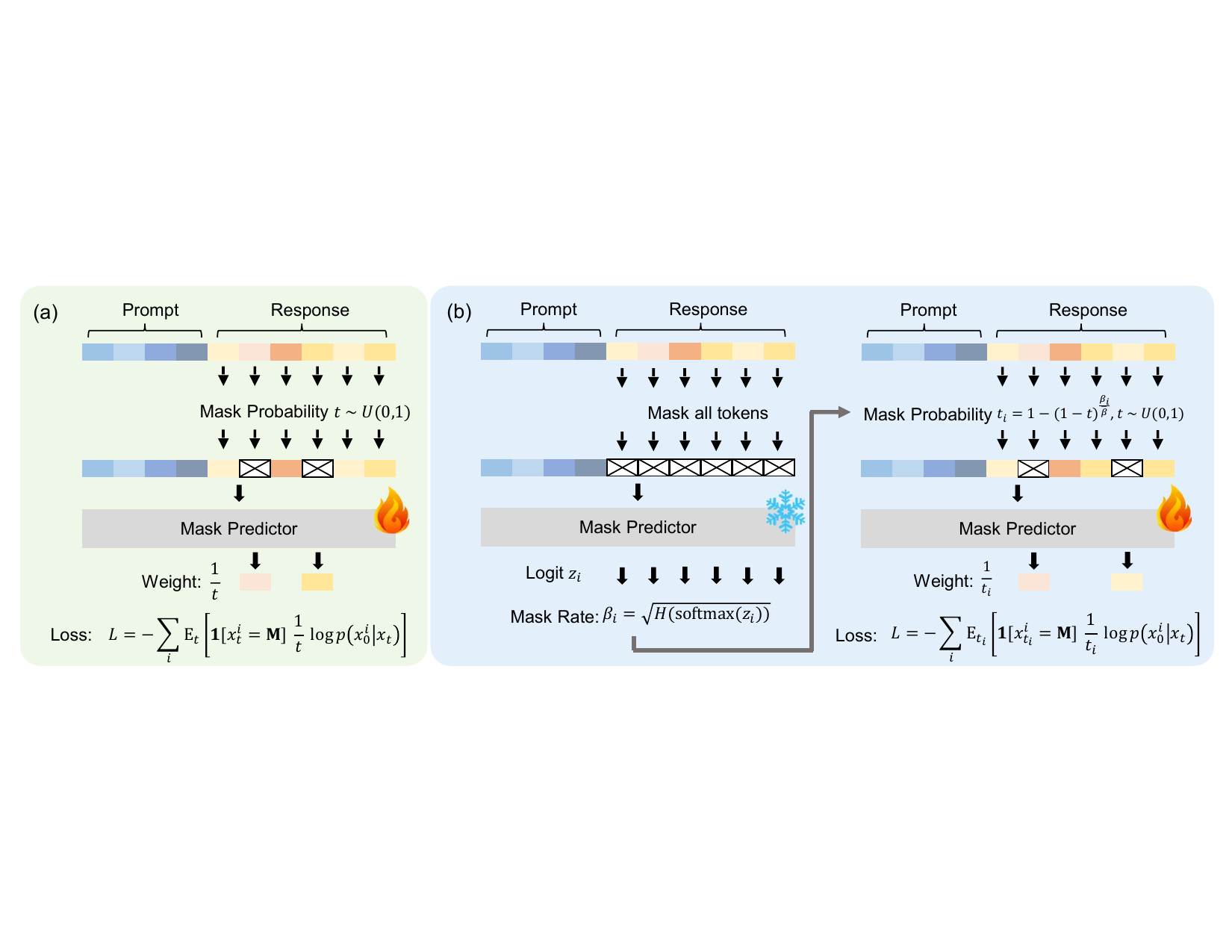} 
    \caption{(a) The SFT pipeline. A timestep $t$ is uniformly sampled from $[0,1]$, and each token is masked independently with probability $t$. The training objective is to predict the masked tokens based on the unmasked ones.
(b) The \ours\ pipeline. In each training step, we perform two forward passes. 
During the first forward pass, we mask the entire answer and estimate the masking rate $\beta_i$ for each token by computing its predictive entropy. 
In the second forward pass, the $i$-th token is masked with probability $t_i$ (computed from $\beta_i$), and its training weight is set to $\tfrac{1}{t_i}$. Tokens with higher entropy are more likely to be masked and thus receive stronger training signals.}
    \label{fig:pipeline}
\end{figure*}

Based on the fundamental principles of diffusion models, we derive an importance-weighted SFT loss formulation that is theoretically consistent with diffusion dynamics.
Empirically, we evaluate this approach on four widely used benchmarks, Sudoku~\citep{arel_sudoku}, Countdown~\citep{tinyzero}, GSM8K~\citep{cobbe2021trainingverifierssolvemath}, and MATH500~\citep{lightman2023letsverifystepstep}, by comparing models fine-tuned with the standard SFT algorithm against those fine-tuned with \ours\ under identical training data.
Specifically, our method is trained using LoRA \citep{hu2022lora} on s1K, s1K-1.1~\citep{muennighoff2025s}, and 3k samples drawn from the Mixture-of-Thoughts dataset of open-r1~\citep{openr1_mixture_of_thoughts_2025}, and using full parameter fine-tuning on 10k samples drawn from Tulu3~\citep{lambert2025tulu3pushingfrontiers}.
Across these diverse training settings, our approach consistently achieves superior overall performance compared to standard SFT.

In addition, ablation studies further confirm that both the theoretically derived importance-weighted SFT loss and the entropy-based importance-weighting scheme are indispensable for achieving the observed performance improvements. 
Time efficiency analysis shows that \ours\ incurs only minimal overhead during LoRA fine-tuning, and imposes no computational overhead during full-parameter fine-tuning, even achieving faster speeds than standard SFT.
Finally, experiments on few-shot learning and alternative model architectures verify the generalizability and superiority of \ours.

In summary, our core contributions are as follows:
\begin{itemize}
    \item We derive a theoretically grounded formulation of the importance-weighted SFT loss from the perspective of diffusion processes, ensuring consistency with the underlying principles of diffusion-based generation.
    \item We identify token-level predictive entropy as a reasonable and effective metric for capturing token importance, and we demonstrate both theoretically and empirically that tokens with higher entropy should be emphasized during training to enhance the effectiveness and stability of the training process.
    \item Building upon these insights, we propose \ours, an importance-weighted finetuning algorithm that consistently achieves superior overall performance compared to the standard SFT approach across multiple training sets and benchmarks. Ablation studies further validate the necessity and effectiveness of each design choice in \ours. Time efficiency analysis demonstrates that \ours\ incurs minimal training time overhead compared to SFT.
\end{itemize}
\section{Preliminaries: Continuous-Time Discrete Diffusion Model}

Following ~\citealp{ou2025absorbingdiscretediffusionsecretly}, we model diffusion language models as a \emph{Continuous-Time Discrete Diffusion Model}~\citep{sun2023scorebased}. Intuitively, in each infinitesimal time interval $\Delta t$, every token has a certain probability of being masked.  

Formally, this process can be described as a time-dependent Markov chain, whose transition dynamics are governed by a matrix $Q_t$. For a transition from state $x$ to $y$ within $(t, t+\Delta t)$, we have
\begin{equation}
p_{t+\Delta t|t}(y|x) = \left\{
\begin{aligned}
& Q_t(x,y)\Delta t +o(\Delta t), \\
& \quad \text{if } x \neq y \\
& 1+Q_t(x,x)\Delta t +o(\Delta t), \\
& \quad \text{if } x =y
\end{aligned}
\right.
\end{equation}
where $Q_t(x,y) \geq 0$ for $x \neq y$ and $Q_t(x,x) < 0$. Thus, $Q_t(x,y)$ specifies the instantaneous transition rate from state $x$ to $y$.  

In practice, it is very common to assume a \emph{time–factorized form}~\citep{campbell2022continuoustimeframeworkdiscrete}, 
\begin{equation}
Q_t = f(t)Q,
\end{equation}
where $Q$ is a constant matrix and $f(t)$ is a scalar function controlling the evolution speed.  

Let $P_{t|s}(x,y) = p_{t|s}(y|x)$ denote the transition probability matrix from time $s$ to $t$. Then, $P_{t|s}$ satisfies Kolmogorov's forward equation~\citep{anderson2012continuous}:
\begin{equation}
\frac{d}{dt} P_{t|s} = f(t) P_{t|s} Q.
\end{equation}

The solution is given by
\begin{equation}
P_{t|s} = \exp\!\left(Q \int_{s}^{t} f(u)\,du\right),
\end{equation}
where $\exp$ denotes the matrix exponential.  

For notational convenience, we define
\begin{equation}
\label{equ:f_bar}
\bar{f}(x) = \int_0^x f(t)\,dt.
\end{equation}
With this shorthand, the solution can be equivalently written as
\begin{equation}
P_{t|s} = \exp\!\big((\bar{f}(t) - \bar{f}(s))Q\big).
\end{equation}

Intuitively, $s$ corresponds to the sequence before masking at the current step, $t$ corresponds to the sequence after masking at the current step, and $Q$ represents the masking rate of each token.
In prior work, it is commonly assumed that the $Q$ matrix takes the following form: ~\citep{ou2025absorbingdiscretediffusionsecretly, nie2025largelanguagediffusionmodels} 

\begin{equation}
Q = \begin{bmatrix}
- 1 & 0 & \cdots & 0 &  1\\
0 & -1 & \cdots & 0 & 1 \\
\vdots & \vdots & \ddots & \vdots & \vdots \\
0 & 0 & \cdots & -1 & 1
\\
0 & 0 & \cdots & 0 & 0
\end{bmatrix}
\end{equation}

Under this assumption, all tokens gradually diffuse into an absorbing state [M], that is, they become masked.

\section{Method}
\subsection{Overview}

We propose \ours\ (Guided Importance-Aware Fine-Tuning), an importance-weighted finetuning method designed to assign higher importance weights to tokens where the current model exhibits greater uncertainty.
As we will explain later, in practice, we quantify token importance using the square root of token entropy.

As shown in Figure~\ref{fig:pipeline}(a), the original SFT approach masks all tokens with the same probability $t$, and each token contributes equally to the loss function with a weight of $\frac{1}{t}$.  In contrast, Figure~\ref{fig:pipeline}(b) illustrates the \ours \ pipeline. 
In each training step, we perform two forward passes. 
In the first pass, the entire answer is masked, and the masking rate $\beta_i$ for each token is estimated based on its predictive entropy.
In the second pass, each token is independently masked with probability $t_i$ (computed from $\beta_i$), and its training weight is assigned as $\tfrac{1}{t_i}$.
This design ensures that tokens with higher entropy are more likely to be masked and trained more frequently, thereby improving the effectiveness and stability of the training process.

The derivation of the loss function is presented in Section~\ref{sec:3.2}, while the details and motivation of the importance weighting scheme are provided in Section~\ref{sec:3.3} and Appendix~\ref{app:mot}.
A comprehensive description of the implementation of \ours\ can be found in Appendix~\ref{app:algo}.

\subsection{Importance-aware Loss Derivation}
\label{sec:3.2}

As noted earlier, prior work typically assumes that all tokens are masked at the same rate, leading to a $Q$ matrix that takes the form of an absorbing matrix composed of $1$ and $-1$. In contrast, we assign each token a distinct masking rate $\beta$, from which we derive an importance-weighted finetuning formulation. Importantly, the resulting SFT loss remains consistent with the fundamental properties of diffusion, since the $Q$ matrix is still well-defined under this construction.

Namely, we now set $Q$ as: 

\begin{equation}
\label{eq:q}
Q = \begin{bmatrix}
-\beta_1 & 0 & \cdots & 0 &  \beta_1 \\
0 & -\beta_2 & \cdots & 0 & \beta_2 \\
\vdots & \vdots & \ddots & \vdots & \vdots \\
0 & 0 & \cdots & -\beta_{n-1} & \beta_{n-1}
\\
0 & 0 & \cdots & 0 & 0
\end{bmatrix}
\end{equation}

Based on the newly defined form of the $Q$ matrix, and following the derivation in ~\citealp{ou2025absorbingdiscretediffusionsecretly}, we obtain the modified formulation of the SFT loss. The detailed derivation is provided in Appendix~\ref{app:der}.

\begin{restatable}{theorem}{maintheorem}
\label{thm:1}
Assuming the $Q$ matrix takes the form given in Equation~\ref{eq:q}, let the initial sequence be $x_0$ and the sequence at time $t$ be $x_t$. Under this setting, the $i$-th token is masked with probability $t_i = 1 - (1-t)^{\tfrac{\beta_{x^i}}{\beta_{\text{ref}}}}$, where $\beta_{x^i}$ denotes the masking rate of the $i$-th token, and $\beta_{\text{ref}}$ is a specified reference masking rate. Moreover, the importance-weighted supervised finetuning loss can be derived as follows:
\begin{equation}
    L =  - \sum_i \mathbb{E}_{t_i} \left[ \mathbf{1}\!\left[ x_{t_i}^i = \mathbf{M} \right] 
\frac{1}{t_i} \log p\big( x_0^i \mid x_t \big) \right]
\end{equation}

\end{restatable}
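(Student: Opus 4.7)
The plan is to lift the single-rate derivation of Ou et al.~\citep{ou2025absorbingdiscretediffusionsecretly} to a per-token setting, exploiting the fact that the new $Q$ matrix in Equation~\ref{eq:q} still preserves independence across token positions. Concretely, every non-mask row in $Q$ couples only the corresponding unmasked state to the absorbing state $\mathbf{M}$, so the forward process $p_{t\mid 0}$ factorizes over the sequence: conditioned on $x_0$, the trajectory of token $i$ is an independent two-state continuous-time Markov chain with leaving rate $\beta_{x^i}$.

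The first concrete step is to compute the per-token marginal. For each position $i$, the $2\times 2$ instance of Kolmogorov's forward equation solves in closed form, giving that the token remains unmasked at time $t$ with probability $\exp\!\left(-\beta_{x^i}\bar f(t)\right)$ and is masked with probability $1-\exp\!\left(-\beta_{x^i}\bar f(t)\right)$, where $\bar f$ is the cumulative schedule of Equation~\ref{equ:f_bar}. I would then fix $\bar f$ by calibrating the global clock to a reference rate: choose $\bar f(t) = -\log(1-t)/\beta_{\text{ref}}$ so that a token with rate exactly $\beta_{\text{ref}}$ is masked with probability $t$. Substituting back yields $t_i = 1 - (1-t)^{\beta_{x^i}/\beta_{\text{ref}}}$, establishing the first assertion of the theorem.

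The second step is to rerun the continuous-time ELBO argument of Ou et al.\ with this new schedule. Because $p_{t\mid 0}$ factorizes over positions, the bound decomposes as a sum over $i$ of an integral in $t$ whose integrand is the reverse-jump rate for token $i$, proportional to $f(t)\,\beta_{x^i}\,\exp(-\beta_{x^i}\bar f(t))/(1-\exp(-\beta_{x^i}\bar f(t)))$, multiplied by $\mathbb{E}[\mathbf{1}[x_t^i=\mathbf{M}]\log p_\theta(x_0^i\mid x_t)]$. The decisive manipulation is a per-token change of variable $u = t_i(t)$, whose Jacobian cancels the $f(t)\,\beta_{x^i}$ factor and turns the time integral into a uniform expectation over $t_i\in[0,1]$ with prefactor $1/t_i$. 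Summing over $i$ recovers the claimed weighted SFT loss.

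The main obstacle I anticipate lies precisely in this per-token change of variable: since every token now travels on its own effective clock, one has to verify that the cancellation of the $\beta_{x^i}$ and $f(t)$ factors is exact and that no cross-token coupling sneaks into the expectation. The factorization set up in the first step is what allows the sum over positions and the per-token $u$-substitution to commute cleanly. A useful sanity check is that setting $\beta_{x^i}\equiv\beta_{\text{ref}}$ restores $t_i=t$ and recovers the standard LLaDA loss, confirming that the weighting scheme is a genuine generalization rather than a reparametrization artifact.
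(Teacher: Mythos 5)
Your proposal is correct and follows essentially the same route as the paper's own derivation: establish the per-token masking marginal $1-\exp(-\beta_{x^i}\bar f(t))$ from the factorized absorbing chain, plug the resulting score ratios into the continuous-time (DSE/ELBO) objective, and perform the per-token change of variables $t_i = 1-\exp(-\beta_{x^i}\bar f(t))$ whose Jacobian cancels the $\beta_{x^i}f(t)$ prefactor to leave the $1/t_i$ weight, with the reference calibration $t=1-\exp(-\beta_{\text{ref}}\bar f(t))$ yielding $t_i = 1-(1-t)^{\beta_{x^i}/\beta_{\text{ref}}}$. The sanity check that $\beta_{x^i}\equiv\beta_{\text{ref}}$ recovers the standard loss is a nice addition but the argument is otherwise the same.
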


Intuitively, under the importance-weighted supervised finetuning loss, a token $x_i$ with a larger masking rate $\beta_{x^i}$ corresponds to a larger $t_i$, making it more likely to be masked and subsequently learned. Here, $\beta_{\mathrm{ref}}$ can be any reference value. Empirically, we set $\beta_{\mathrm{ref}}$ to the mean of all $\beta$ values  to enhance numerical stability.

\subsection{Motivation of Using Entropy as the Masking Rate Metric}
\label{sec:3.3}

\begin{figure*}[t]
    \centering
    \begin{minipage}{0.48\textwidth}
        \centering
        \includegraphics[width=\textwidth]{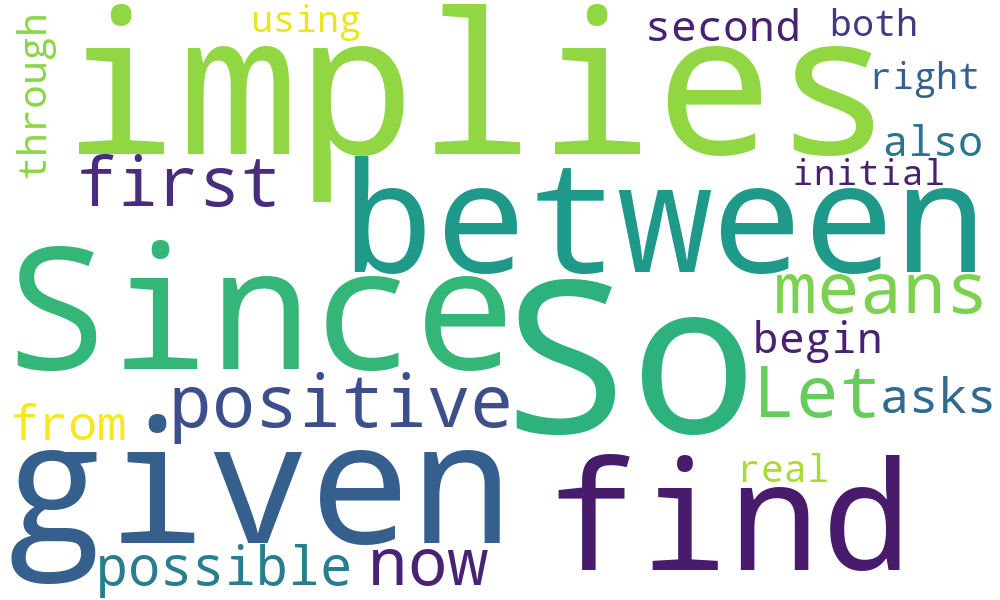}
        {Frequent tokens with high entropy}
    \end{minipage}
    \hfill
    \begin{minipage}{0.48\textwidth}
        \centering
        \includegraphics[width=\textwidth]{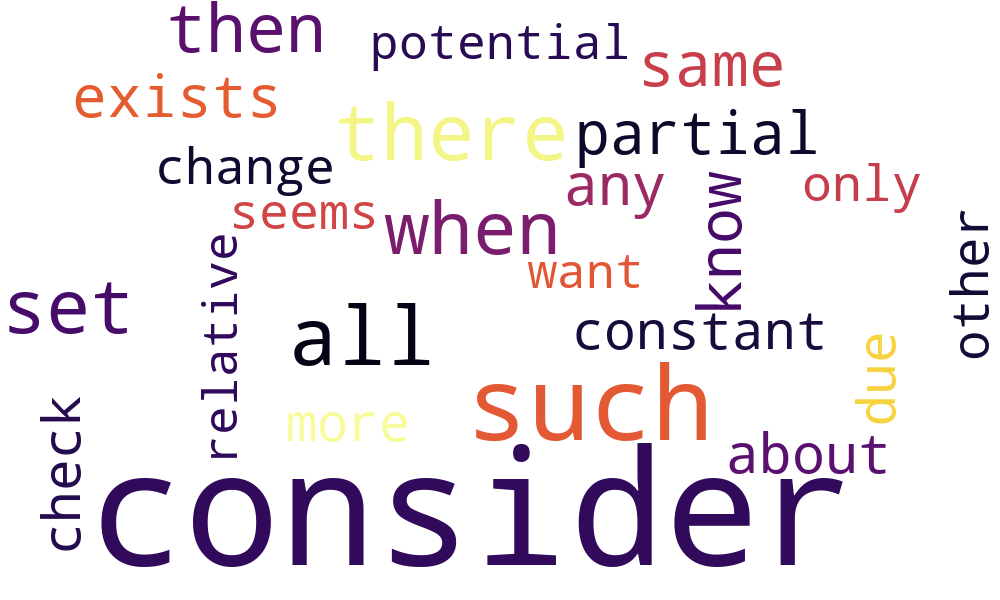}
    {Frequent tokens with low entropy}
    \end{minipage}
    \caption{Visualization of high-frequency tokens with different entropy levels. Tokens with higher entropy correspond to those where the model exhibits greater uncertainty, and are learned more frequently during training.}
    \label{fig:entropy}
\end{figure*}
Having established the form of the importance-weighted supervised finetuning loss, we next investigate how to assign masking rate to tokens.  
First, we require that the chosen metric does not introduce excessive computational overhead. In particular, it is infeasible to compute such a metric for every token individually, and we can only afford $O(1)$ additional forward passes in total. For this reason, we consider masking all the answers and then examining the distribution of the model’s output logits $z_i$ to be a practical choice. Specifically, given a problem $x$ and its answer $y$, we mask the entire answer and compute
\begin{equation}
z_i = \text{model}(\cdot \mid x, [M]).\text{logits}.
\end{equation}

We then determine the masking rate $\beta_i$ of the $i$-th token.
Based on our analysis, the entropy of the distribution of $z_i$ serves as an effective metric.
First, drawing inspiration from theoretical insights, we demonstrate that using entropy as an importance weight may help reduce the variance of the loss estimation.
Specifically, prior work has demonstrated that importance sampling achieves minimum variance when the sampling proposal is proportional to the L2 norm of the gradient. Through reasonable approximation, we found that token-wise entropy is approximately proportional to the L2 norm of the gradient, which motivates our choice of this metric.
We provide a detailed discussion in Appendix~\ref{app:mot}.
Furthermore, as shown in Figure~\ref{fig:entropy}, we empirically visualize the entropy distribution among the 100 most frequent tokens during training, contrasting tokens with higher versus lower entropy.
It can be observed that tokens with higher entropy tend to carry richer information and play a more critical role in the model's decision-making.
For example, words such as ``first'', ``second'', ``since'', and ``so'' exhibit higher entropy, often signaling logical or sequential structures, whereas words like ``all'' and ``such'' show lower entropy and function primarily as auxiliary words, contributing less to semantic content.
Our findings align with recent observations in LLMs~\citep{wang20258020rulehighentropyminority}, which note that a minority of high-entropy tokens serve to bridge logical connections between consecutive reasoning steps, while the majority of low-entropy tokens primarily complete current phrases or words. 

Since we empirically find that using the raw entropy values may destabilize training (Appendix ~\ref{exp:ablation_entropy}), we use the square root of entropy, which yields more stable optimization in practice:

\begin{equation}
\beta_i = \sqrt{\,H(\mathrm{softmax}(z_i))\,}.
\end{equation}

\section{Experiment}

In this section, we first describe the experimental setup, followed by a comparison of the performance of \ours \ and the standard SFT algorithm across multiple training datasets and benchmarks. We then conduct ablation studies to validate the necessity of each design component in \ours, and  provide a runtime comparison between \ours \ and SFT under identical settings. Finally, we provide a comparison between \ours \ and SFT under few-shot learning settings, as well as the generalizability of \ours \ across dLLM model architectures.

\begin{table*}[t]
\centering
\begin{tabular*}{\textwidth}{l@{\extracolsep{\fill}}ccccc}
\toprule
\multirow{2}{*}{\textbf{Model}} & 
\textbf{Sudoku}    & \textbf{Countdown} & \textbf{GSM8K} & \textbf{MATH500} & \multirow{2}{*}{\textbf{Average}} \\
& \textit{0-shot} & \textit{0-shot}   & \textit{0-shot} & \textit{0-shot} & \\
\midrule
\multicolumn{6}{l}{\textbf{Base Instruct Model}} \\
\midrule
LLaDA-8B-Instruct   &  5.6 $\pm$ 0.2 &  17.0 $\pm$ 1.0 & 76.8 $\pm$ 0.3  & 32.5 $\pm$ 0.3   &  33.0 $\pm$ 0.3 \\
\midrule
\multicolumn{6}{l}{\textbf{s1K Dataset, LoRA Tuning}} \\
\midrule
Instruct Model + SFT &  4.2 $\pm$ 0.4 &  21.1 $\pm$ 1.4 & \textbf{78.5 $\pm$ 0.2} & 32.6 $\pm$ 0.9 &  34.1 $\pm$ 0.4 \\
Instruct Model + \ours  &   \textbf{6.0 $\pm$ 0.9} & \textbf{28.1 $\pm$ 1.9} & 77.3 $\pm$ 0.3  & \textbf{33.7 $\pm$ 0.3} &  \textbf{36.3 $\pm$ 0.5} \\
\midrule
\multicolumn{6}{l}{\textbf{s1K-1.1 Dataset, LoRA Tuning}} \\
\midrule
Instruct Model  + SFT &  4.9 $\pm$ 0.3 &  20.7 $\pm$ 0.4  & \textbf{ 77.7 $\pm$ 0.3 } &  31.2 $\pm$ 0.8 &  33.6 $\pm$ 0.2 \\
Instruct Model + \ours  &   \textbf{5.9 $\pm$ 0.4} & \textbf{21.8 $\pm$ 0.8} & 77.6 $\pm$ 0.3  & \textbf{32.0 $\pm$ 0.9} &  \textbf{34.3 $\pm$ 0.3} \\
\midrule
\multicolumn{6}{l}{\textbf{openr1-3k Dataset, LoRA Tuning}} \\
\midrule
Instruct Model  + SFT &  5.9 $\pm$ 0.4 &  17.3 $\pm$ 0.8 & 77.1 $\pm$ 0.2  &  31.6 $\pm$ 0.6 &  33.0 $\pm$ 0.2 \\
Instruct Model  + \ours  &  \textbf{6.8 $\pm$ 0.3} & \textbf{18.8 $\pm$ 0.7} & \textbf{77.7 $\pm$ 0.3}  & \textbf{32.8 $\pm$ 0.8} & \textbf{34.0 $\pm$ 0.2} \\
\midrule
\multicolumn{6}{l}{\textbf{Tulu3-10k Dataset, Full Paramter Tuning}} \\
\midrule
 Base Model + SFT &  \textbf{5.1 $\pm$ 0.3} &  18.2 $\pm$ 0.9 & \textbf{66.3 $\pm$ 0.2} & 23.6 $\pm$ 0.6 & 28.3 $\pm$ 0.3 \\
 Base Model + \ours  &  4.9 $\pm$ 0.4  & \textbf{21.3 $\pm$ 1.1} & 65.8 $\pm$ 0.3 & \textbf{24.6 $\pm$ 0.8} & \textbf{29.2 $\pm$ 0.3} \\
\bottomrule
\end{tabular*}
\caption{Comparison of the performance of \ours \ and SFT across different training datasets and benchmarks.  In Appendix \ref{app:few}, we include performance metrics for 5-shot evaluation.}
\label{tab:main}
\end{table*}

\subsection{Experimental Setup} 

We adopt LLaDA-8B-Base and LLaDA-8B-Instruct~\citep{nie2025largelanguagediffusionmodels} as base models and apply LoRA fine-tuning~\citep{hu2022lora}  or full parameter fine-tuning.
For the training datasets, we use s1K, s1K-1.1, openr1-3k, and Tulu3-10k to evaluate the performance of \ours\ and SFT.
Here, s1K and s1K-1.1 are reasoning datasets proposed by~\citealp{muennighoff2025s}, while openr1-3k is constructed by sampling 3k examples from the Mixture-of-Thoughts dataset of open-r1~\citep{openr1_mixture_of_thoughts_2025}, with sequence lengths less than 2.5k under the open-r1 encoder.
Tulu3-10k consists of 10k samples drawn from Tulu3~\citep{lambert2025tulu3pushingfrontiers}, following the setting of~\citealp{dllm}.
To verify that our algorithm is framework-agnostic, we employ the \texttt{wd1} ~\citep{tang2025wd1weightedpolicyoptimization} framework for LoRA training, and the \texttt{dllm}~\citep{dllm} framework for full-parameter fine-tuning. All experiments are conducted across three independent runs, with results reported as mean value and uncertainty.

The evaluation is conducted on the following benchmarks:  

\begin{itemize}
    \item \textbf{Sudoku}~\citep{arel_sudoku}: A symbolic reasoning benchmark requiring models to solve Sudoku puzzles through  logical deduction.  
    \item \textbf{Countdown}~\citep{tinyzero}: A numerical reasoning dataset where models must combine given numbers using arithmetic operations to reach a target value.  
    \item \textbf{GSM8K}~\citep{cobbe2021trainingverifierssolvemath}: A widely used benchmark of grade-school math word problems designed to evaluate arithmetic and reasoning abilities.  
    \item \textbf{MATH500}~\citep{lightman2023letsverifystepstep}: A subset of the MATH dataset containing 500 challenging competition-level math problems that test advanced mathematical reasoning.  
\end{itemize}

The detailed training and evaluation hyperparameters and settings are provided in Appendix~\ref{app:hyper}.

\subsection{Main Results}
As shown in Table~\ref{tab:main}, we compare the performance of \ours\ and SFT across different training datasets and benchmarks.
After a single training run, we use 3 random seeds to evaluate model performance and report the average results with error bars.
Notably, \ours\ consistently achieves higher average accuracy than SFT, demonstrating that our method achieves effective training by leveraging token entropy. In Appendix \ref{app:few}, we include performance metrics for 5-shot evaluation.

\subsection{Ablation Study}
In this subsection, we conduct ablation studies to validate the design choices of \ours.  

\begin{table*}[t]
\centering
\begin{tabular*}{\textwidth}{l@{\extracolsep{\fill}}ccccc}
\toprule
\multirow{2}{*}{\textbf{Model}} & 
\textbf{Sudoku}    & \textbf{Countdown} & \textbf{GSM8K} & \textbf{MATH500} & \multirow{2}{*}{\textbf{Average}} \\
& \textit{0-shot} & \textit{0-shot}   & \textit{0-shot} & \textit{0-shot} & \\
\midrule
\textbf{Base Model} \\
\midrule
LLaDA-8B-Instruct   &  5.5 &  16.0 & 76.7  & 32.4   &  32.6 \\
\midrule
\textbf{s1K Dataset} \\
\midrule
+ SFT &  4.6 (-0.9) &  23.8 (+7.8) & 78.8 (+2.1) & 32.6 (+0.2) &  34.9 (+2.3)\\
+ Weighted SFT (- log p) & 3.0 (-2.5) & 16.4 (+0.4) & 75.4 (-1.3)& 29.2 (-3.2) & 31.0 (-1.6) \\
+ Dream & 3.3 (-2.2) & 21.1 (+5.1) & 77.0 (+0.3)& 30.0 (-2.4) & 32.8 (+0.2) \\
+ \ours \ (SW) & 5.3 (-0.2) & 23.8 (+7.8) & 75.9 (-0.8)& 31.6 (-0.8) & 34.2 (+1.6) \\
+ \ours  &   7.8 (+2.3) & 24.6 (+8.6) & 78.0 (+1.3)  & 33.0  (+0.6) &  35.8 (+3.2)\\
\bottomrule
\end{tabular*}
\caption{Ablation Study. (a) Comparison between entropy-based weighting and NLL-based weighting. As shown in the table, Weighted SFT (SFT algorithm with NLL-based importance weighting) performs significantly worse. (b) Comparison between \ours, \ours \ (SW) (token-wise weighting using our square-root entropy), and Dream. 
Dream yields negligible improvements. 
\ours \ (SW) provides some gains, but still performs worse than SFT.}
\label{tab:abl}
\end{table*}

\textbf{Necessity of Using Entropy as the Metric.}
To demonstrate the necessity of entropy-based importance weighting, we compared our method with a baseline using the negative log-likelihood (NLL). Unlike entropy-based importance weighting, NLL emphasizes tokens that appear less frequently rather than those that are difficult to predict or carry more information. 
Assuming the logits are $z_i$, the masking rate $\beta_i$ for the $i$-th token under the NLL setting is:

\begin{equation}
    \beta_i = -\log\bigl(\text{softmax}(z_i)^{x_0^i}\bigr) 
\end{equation}

As shown in Table~\ref{tab:abl}, the SFT method weighted by NLL performs poorly, in some cases even worse than the original base model. This underscores the importance of focusing on high-entropy, informative tokens for achieving effective fine-tuning.

\textbf{Validity of the Theoretically Derived Loss Function.}
Our loss function is derived from the continuous-time discrete diffusion model combined with the Denoising Score Entropy (DSE) loss. As a result, our model preserves the inherent characteristics of the diffusion process more effectively than methods that simply add a weight to the loss function:

\begin{equation}
L = -\sum_i \mathbb{E}_t \left[ \mathbf{1}\!\left[x_t^i = \mathbf{M}\right] 
\frac{w_i}{t} \log p\left(x_0^i \mid x_t\right) \right]
\end{equation}

To illustrate this, we compare our approach with two weighted SFT methods by applying a simple token-wise weighting, namely using our square-root entropy as the weight, and using the Dream~\citep{ye2025dream7bdiffusionlarge} method. 
Specifically, when using our square-root entropy, the weight of the $i$-th token is:  

\begin{equation}
w_i = \sqrt{\,H(\mathrm{softmax}(z_i))\,}
\end{equation}

When using the weighting method from Dream, the weight of the $i$-th token is:  

\begin{equation}
w_i = \tfrac{1}{2} \sum_{j=1}^N 
\mathbf{1}\!\left[x_t^j \neq \mathbf{M}\right] 
\, \mathrm{Geo}\!\left(p, \lvert j - i \rvert - 1\right)
\end{equation}

Here, Geo denotes the geometric distribution, and $p$ controls the sharpness of the distribution. Since the original work did not provide the corresponding parameter setting, we set it to a moderate value of 0.3.

Table~\ref{tab:abl} presents a comparison between \ours, \ours \ (SW) (i.e., simple token-wise weighting using our square-root entropy as the weight), and Dream. 
Dream yields only minimal improvements and, in some cases, performs worse than standard SFT. 
\ours \ (SW) provides some gains but still performs worse than SFT, whereas \ours \ significantly outperforms SFT. 
This demonstrates that the diffusion loss derived from theoretical principles possesses superior properties.

\begin{table*}[t]
\centering
\begin{tabular*}{\textwidth}{l@{\extracolsep{\fill}}ccccc}
\toprule
\multirow{2}{*}{\textbf{Model}} & 
\textbf{Sudoku}    & \textbf{Countdown} & \textbf{GSM8K} & \textbf{MATH500} & \multirow{2}{*}{\textbf{Average}} \\
& \textit{0-shot} & \textit{0-shot}   & \textit{0-shot} & \textit{0-shot} & \\
\midrule
\multicolumn{6}{l}{\textbf{Tulu3-10k Dataset, Full Paramter Tuning}} \\
\midrule
Base Model + SFT & \textbf{3.8 $\pm$ 0.4} & $12.3 \pm 0.7$ & $50.3 \pm 0.3$ & $17.6 \pm 0.6$ & $21.0 \pm 0.3$ \\
Base Model + \ours  & $2.1 \pm 0.3$ & \textbf{14.1 $\pm$ 0.8} & \textbf{55.2 $\pm$ 0.2} & \textbf{22.2 $\pm$ 0.5} & \textbf{23.4 $\pm$ 0.3} \\
\bottomrule
\end{tabular*}
\caption{Full-parameter fine-tuning results on Dream-v0-7B-Base.}
\label{tab:dream_generalization}
\end{table*}

\subsection{Time Efficiency Analysis}
In the time efficiency analysis, there are two counteracting factors.
First, \ours\ requires an additional forward pass compared to SFT, resulting in a slightly higher computational overhead.
However, it is important to note that the cost of a single gradient-free forward pass is significantly lower than that of a backward pass; thus, this factor only causes a marginal decrease in speed.
Second, the expected number of masked tokens per iteration may differ between \ours\ and SFT, which also impacts the overall training speed.

\begin{table}[H]
    \centering
    \begin{tabular*}{0.9\linewidth}{l@{\extracolsep{\fill}}ccc}
        \toprule
        \textbf{Dataset} & \textbf{Framework} & \textbf{SFT} & \textbf{\ours} \\
        \midrule
        s1K & wd1 & 46 min & 57 min \\
        Tulu3 & dllm &28 min & 26 min \\
        \bottomrule
    \end{tabular*}
    \caption{Time efficiency comparison between SFT and \ours\ on different tasks and frameworks.}
    \label{tab:speed_comparison}
\end{table}

As shown in Table~\ref{tab:speed_comparison}, we compare the training speed of \ours\ and SFT under two distinct settings:
(1) LoRA fine-tuning on the \texttt{s1K} dataset for 20 epochs using the \texttt{wd1} training framework on 4 H100 GPUs; and
(2) full fine-tuning on the \texttt{Tulu3-10k} dataset for 5 epochs using the \texttt{dllm} training framework on 8 H100 GPUs.
These scenarios represent representative results selected from our main experiments.
The results indicate that while \ours\ is merely slightly slower than SFT during LoRA fine-tuning, it is marginally faster than SFT during full fine-tuning.
We attribute this to the fact that the additional forward pass incurs only a minor overhead.
Conversely, since \ours\ prioritizes more significant tokens, the expected total number of masked tokens may be lower than that of SFT, thereby reducing the computational load during backpropagation.
This reduction in computation has a more pronounced impact during full fine-tuning compared to LoRA, leading to the observation that our method achieves faster speeds than SFT in the full fine-tuning setting.

\subsection{Generalizability across Architectures}
To verify that our method is applicable to dLLMs of various architectures and to establish generalizability beyond the LLaDA-8B backbone, we perform full-parameter fine-tuning on Dream-v0-7B-Base using the dLLM framework and the Tulu3-10k dataset. 

In Table \ref{tab:dream_generalization}, the results further demonstrate the superiority of the GIFT algorithm over standard SFT and verify the generalizability of our approach across different model architectures.
\section{Related Work}

\textbf{Diffusion Language Models.}
Diffusion models, initially developed for continuous data, have recently been adapted to discrete sequences, including natural language. Early work such as Structured Denoising Diffusion Models (D3PMs) \citep{austin2023structureddenoisingdiffusionmodels} extended discrete diffusion to text, showing competitive performance in character-level generation and large vocabulary language modeling. DiffusionBERT \citep{he2022diffusionbertimprovinggenerativemasked} combined masked language modeling with diffusion, improving generative quality over conventional masked LMs. Later, masked diffusion frameworks \citep{sahoo2024simpleeffectivemaskeddiffusion, shi2025simplifiedgeneralizedmaskeddiffusion} introduced simplified and generalized training objectives that reduce complexity while maintaining performance, and support state-dependent masking strategies. Diffusion large language models (dLLMs) including LLaDA \citep{nie2025largelanguagediffusionmodels}  and Dream ~\citep{ye2025dream7bdiffusionlarge} demonstrated that diffusion-based models can scale to billions of parameters, achieving strong context learning and instruction-following capabilities. Techniques such as variance-reduced preference optimization \citep{zhu2025llada15variancereducedpreference} and reinforcement of lateral thought chains \citep{huang2025reinforcingdiffusionchainlateral} further enhance generative stability and reasoning ability, while efficient perplexity bounds and ratio matching \citep{haxholli2025efficientperplexityboundratio} improve sampling efficiency. More recently, works such as d1~\citep{zhao2025d1scalingreasoningdiffusion} and wd1~\citep{tang2025wd1weightedpolicyoptimization} have explored the possibility of applying reinforcement learning on top of existing dLLMs. LLaDA 2.0 \citep{bie2025llada20scalingdiffusionlanguage} pushed the boundaries of model capacity by scaling diffusion language models to 100B parameters, offering a comprehensive analysis of scaling laws.

\textbf{Weighted Methods in Language Model Learning.}
A complementary line of research focuses on improving large language models through weighted supervised fine-tuning (SFT) and reinforcement learning. 
Weighted SFT techniques aim to prioritize informative tokens. 
DFT~\citep{wu2025generalizationsftreinforcementlearning} addresses the limited generalization of SFT by dynamically rescaling token-level objectives to stabilize gradient updates. 
Rho-1~\citep{lin2025rho1tokensneed} challenges the conventional uniform next-token prediction objective by selectively training on useful tokens identified via a reference model. 
Similarly, \citealp{luo2023reweightingtokenssimpleeffective} propose a re-weighting strategy for active learning in NER, assigning dynamic smoothed weights to tokens to  boost performance.
In reinforcement learning, applying weighting to improve learning effectiveness is also a common strategy. 
\citealp{cui2025entropymechanismreinforcementlearning} investigate the entropy dynamics of reinforcement learning for reasoning LLMs, and propose entropy-based weighting methods to prevent entropy collapse and encourage exploration. 
\citealp{wang20258020rulehighentropyminority} highlight the critical role of high-entropy minority tokens in RLVR, demonstrating that restricting updates to these tokens significantly enhances reasoning capabilities.  
Together, these approaches demonstrate that token-level weighting is an important and effective way to improve the optimization of language models.

Despite these advancements, most weighting methods explored for classical LLMs cannot be directly transferred to the dLLM setting. For instance, methods like DFT and Rho-1 rely on the specific properties of auto-regressive models. While the work by \citealp{wang20258020rulehighentropyminority} is related to ours in emphasizing the importance of high-entropy tokens, they merely apply a hard threshold to select tokens for gradient computation. In contrast, our \ours\ weighting method is tailored to the unique characteristics of dLLMs. It determines importance weights for all tokens through a single forward pass, making our approach fundamentally distinct from these previous weighting mechanisms.
\section{Conclusion}
We introduced \ours, a theoretically grounded importance-aware fine-tuning algorithm for diffusion language models.
By leveraging token-level entropy as a measure of importance, \ours\ prioritizes high-uncertainty tokens and preserves diffusion process properties.
Experiments across varying datasets, different training frameworks, multiple benchmarks, and on both base and instruct models demonstrate the universal advantage of the \ours\ method over SFT.
These results highlight entropy-based importance weighting as a simple yet powerful strategy for enhancing training effectiveness and stability in diffusion language models.

\clearpage

\bibliography{custom}

@inproceedings{
nie2025largelanguagediffusionmodels,
title={Large Language Diffusion Models},
author={Shen Nie and Fengqi Zhu and Zebin You and Xiaolu Zhang and Jingyang Ou and Jun Hu and JUN ZHOU and Yankai Lin and Ji-Rong Wen and Chongxuan Li},
booktitle={ICLR 2025 Workshop on Deep Generative Model in Machine Learning: Theory, Principle and Efficacy},
year={2025},
url={https://openreview.net/forum?id=wzl61tIUj6}
}

@inproceedings{
ou2025absorbingdiscretediffusionsecretly,
title={Your Absorbing Discrete Diffusion Secretly Models the Conditional Distributions of Clean Data},
author={Jingyang Ou and Shen Nie and Kaiwen Xue and Fengqi Zhu and Jiacheng Sun and Zhenguo Li and Chongxuan Li},
booktitle={The Thirteenth International Conference on Learning Representations},
year={2025},
url={https://openreview.net/forum?id=sMyXP8Tanm}
}

@misc{tang2025wd1weightedpolicyoptimization,
      title={wd1: Weighted Policy Optimization for Reasoning in Diffusion Language Models}, 
      author={Xiaohang Tang and Rares Dolga and Sangwoong Yoon and Ilija Bogunovic},
      year={2025},
      eprint={2507.08838},
      archivePrefix={arXiv},
      primaryClass={cs.LG},
      url={https://arxiv.org/abs/2507.08838}, 
}

@inproceedings{
austin2023structureddenoisingdiffusionmodels,
title={Structured Denoising Diffusion Models in Discrete State-Spaces},
author={Jacob Austin and Daniel D. Johnson and Jonathan Ho and Daniel Tarlow and Rianne van den Berg},
booktitle={Advances in Neural Information Processing Systems},
editor={A. Beygelzimer and Y. Dauphin and P. Liang and J. Wortman Vaughan},
year={2021},
url={https://openreview.net/forum?id=h7-XixPCAL}
}

@inproceedings{
sahoo2024simpleeffectivemaskeddiffusion,
title={Simple and Effective Masked Diffusion Language Models},
author={Subham Sekhar Sahoo and Marianne Arriola and Aaron Gokaslan and Edgar Mariano Marroquin and Alexander M Rush and Yair Schiff and Justin T Chiu and Volodymyr Kuleshov},
booktitle={The Thirty-eighth Annual Conference on Neural Information Processing Systems},
year={2024},
url={https://openreview.net/forum?id=L4uaAR4ArM}
}

@inproceedings{
shi2025simplifiedgeneralizedmaskeddiffusion,
title={Simplified and Generalized Masked Diffusion for Discrete Data},
author={Jiaxin Shi and Kehang Han and Zhe Wang and Arnaud Doucet and Michalis Titsias},
booktitle={The Thirty-eighth Annual Conference on Neural Information Processing Systems},
year={2024},
url={https://openreview.net/forum?id=xcqSOfHt4g}
}

@misc{zhu2025llada15variancereducedpreference,
      title={LLaDA 1.5: Variance-Reduced Preference Optimization for Large Language Diffusion Models}, 
      author={Fengqi Zhu and Rongzhen Wang and Shen Nie and Xiaolu Zhang and Chunwei Wu and Jun Hu and Jun Zhou and Jianfei Chen and Yankai Lin and Ji-Rong Wen and Chongxuan Li},
      year={2025},
      eprint={2505.19223},
      archivePrefix={arXiv},
      primaryClass={cs.LG},
      url={https://arxiv.org/abs/2505.19223}, 
}

@article{huang2025reinforcingdiffusionchainlateral,
  publtype={informal},
  author={Zemin Huang and Zhiyang Chen and Zijun Wang and Tiancheng Li and Guo-Jun Qi},
  title={Reinforcing the Diffusion Chain of Lateral Thought with Diffusion Language Models},
  year={2025},
  month={May},
  cdate={1746057600000},
  journal={CoRR},
  volume={abs/2505.10446},
  url={https://doi.org/10.48550/arXiv.2505.10446}
}

@inproceedings{
haxholli2025efficientperplexityboundratio,
title={Efficient Perplexity Bound and Ratio Matching in Discrete Diffusion Language Models},
author={Etrit Haxholli and Yeti Z. Gurbuz and O{\u{g}}ul Can and Eli Waxman},
booktitle={The Thirteenth International Conference on Learning Representations},
year={2025},
url={https://openreview.net/forum?id=Mri9WIfxSm}
}

@inproceedings{he2022diffusionbertimprovinggenerativemasked,
  author={Zhengfu He and Tianxiang Sun and Qiong Tang and Kuanning Wang and Xuanjing Huang and Xipeng Qiu},
  title={DiffusionBERT: Improving Generative Masked Language Models with Diffusion Models},
  year={2023},
  cdate={1672531200000},
  pages={4521-4534},
  url={https://doi.org/10.18653/v1/2023.acl-long.248},
  booktitle={ACL (1)}
}

@misc{wang20258020rulehighentropyminority,
      title={Beyond the 80/20 Rule: High-Entropy Minority Tokens Drive Effective Reinforcement Learning for LLM Reasoning}, 
      author={Shenzhi Wang and Le Yu and Chang Gao and Chujie Zheng and Shixuan Liu and Rui Lu and Kai Dang and Xionghui Chen and Jianxin Yang and Zhenru Zhang and Yuqiong Liu and An Yang and Andrew Zhao and Yang Yue and Shiji Song and Bowen Yu and Gao Huang and Junyang Lin},
      year={2025},
      eprint={2506.01939},
      archivePrefix={arXiv},
      primaryClass={cs.CL},
      url={https://arxiv.org/abs/2506.01939}, 
}

@misc{wu2025generalizationsftreinforcementlearning,
      title={On the Generalization of SFT: A Reinforcement Learning Perspective with Reward Rectification}, 
      author={Yongliang Wu and Yizhou Zhou and Zhou Ziheng and Yingzhe Peng and Xinyu Ye and Xinting Hu and Wenbo Zhu and Lu Qi and Ming-Hsuan Yang and Xu Yang},
      year={2025},
      eprint={2508.05629},
      archivePrefix={arXiv},
      primaryClass={cs.LG},
      url={https://arxiv.org/abs/2508.05629}, 
}

@misc{cui2025entropymechanismreinforcementlearning,
      title={The Entropy Mechanism of Reinforcement Learning for Reasoning Language Models}, 
      author={Ganqu Cui and Yuchen Zhang and Jiacheng Chen and Lifan Yuan and Zhi Wang and Yuxin Zuo and Haozhan Li and Yuchen Fan and Huayu Chen and Weize Chen and Zhiyuan Liu and Hao Peng and Lei Bai and Wanli Ouyang and Yu Cheng and Bowen Zhou and Ning Ding},
      year={2025},
      eprint={2505.22617},
      archivePrefix={arXiv},
      primaryClass={cs.LG},
      url={https://arxiv.org/abs/2505.22617}, 
}

@article{lin2025rho1tokensneed,
  publtype={informal},
  author={Zhenghao Lin and Zhibin Gou and Yeyun Gong and Xiao Liu and Yelong Shen and Ruochen Xu and Chen Lin and Yujiu Yang and Jian Jiao and Nan Duan and Weizhu Chen},
  title={Rho-1: Not All Tokens Are What You Need},
  year={2024},
  cdate={1704067200000},
  journal={CoRR},
  volume={abs/2404.07965},
  url={https://doi.org/10.48550/arXiv.2404.07965}
}

@inproceedings{
luo2023reweightingtokenssimpleeffective,
title={Re-weighting Tokens: A Simple and Effective Active Learning Strategy for Named Entity Recognition},
author={Haocheng Luo and Wei Tan and Ngoc Dang Nguyen and Lan Du},
booktitle={The 2023 Conference on Empirical Methods in Natural Language Processing},
year={2023},
url={https://openreview.net/forum?id=CihCvXPiEG}
}

@misc{lou2024discretediffusionmodelingestimating,
      title={Discrete Diffusion Modeling by Estimating the Ratios of the Data Distribution}, 
      author={Aaron Lou and Chenlin Meng and Stefano Ermon},
      year={2024},
      eprint={2310.16834},
      archivePrefix={arXiv},
      primaryClass={stat.ML},
      url={https://arxiv.org/abs/2310.16834}, 
}

@misc{yang2025mmadamultimodallargediffusion,
      title={MMaDA: Multimodal Large Diffusion Language Models}, 
      author={Ling Yang and Ye Tian and Bowen Li and Xinchen Zhang and Ke Shen and Yunhai Tong and Mengdi Wang},
      year={2025},
      eprint={2505.15809},
      archivePrefix={arXiv},
      primaryClass={cs.CV},
      url={https://arxiv.org/abs/2505.15809}, 
}

@misc{schulman2017proximalpolicyoptimizationalgorithms,
      title={Proximal Policy Optimization Algorithms}, 
      author={John Schulman and Filip Wolski and Prafulla Dhariwal and Alec Radford and Oleg Klimov},
      year={2017},
      eprint={1707.06347},
      archivePrefix={arXiv},
      primaryClass={cs.LG},
      url={https://arxiv.org/abs/1707.06347}, 
}

@misc{cobbe2021trainingverifierssolvemath,
      title={Training Verifiers to Solve Math Word Problems}, 
      author={Karl Cobbe and Vineet Kosaraju and Mohammad Bavarian and Mark Chen and Heewoo Jun and Lukasz Kaiser and Matthias Plappert and Jerry Tworek and Jacob Hilton and Reiichiro Nakano and Christopher Hesse and John Schulman},
      year={2021},
      eprint={2110.14168},
      archivePrefix={arXiv},
      primaryClass={cs.LG},
      url={https://arxiv.org/abs/2110.14168}, 
}

@inproceedings{
lightman2023letsverifystepstep,
title={Let's Verify Step by Step},
author={Hunter Lightman and Vineet Kosaraju and Yuri Burda and Harrison Edwards and Bowen Baker and Teddy Lee and Jan Leike and John Schulman and Ilya Sutskever and Karl Cobbe},
booktitle={The Twelfth International Conference on Learning Representations},
year={2024},
url={https://openreview.net/forum?id=v8L0pN6EOi}
}

@misc{arel_sudoku,
  author       = {Arel},
  title        = {Arel's Sudoku Generator},
  howpublished = {\url{https://www.ocf.berkeley.edu/~arel/sudoku/main.html}},
  note         = {Accessed: 2025-04-08},
  year         = {2025},
}

@misc{tinyzero,
author       = {Jiayi Pan and Junjie Zhang and Xingyao Wang and Lifan Yuan and Hao Peng and Alane Suhr},
title        = {TinyZero},
howpublished = {https://github.com/Jiayi-Pan/TinyZero},
note         = {Accessed: 2025-01-24},
year         = {2025}
}

@inproceedings{
campbell2022continuoustimeframeworkdiscrete,
title={A Continuous Time Framework for Discrete Denoising Models},
author={Andrew Campbell and Joe Benton and Valentin De Bortoli and Tom Rainforth and George Deligiannidis and Arnaud Doucet},
booktitle={Advances in Neural Information Processing Systems},
editor={Alice H. Oh and Alekh Agarwal and Danielle Belgrave and Kyunghyun Cho},
year={2022},
url={https://openreview.net/forum?id=DmT862YAieY}
}

@book{anderson2012continuous,
  title={Continuous-time Markov chains: An applications-oriented approach},
  author={Anderson, William J},
  year={2012},
  publisher={Springer Science \& Business Media}
}

@misc{ye2025dream7bdiffusionlarge,
      title={Dream 7B: Diffusion Large Language Models}, 
      author={Jiacheng Ye and Zhihui Xie and Lin Zheng and Jiahui Gao and Zirui Wu and Xin Jiang and Zhenguo Li and Lingpeng Kong},
      year={2025},
      eprint={2508.15487},
      archivePrefix={arXiv},
      primaryClass={cs.CL},
      url={https://arxiv.org/abs/2508.15487}, 
}

@inproceedings{
muennighoff2025s,
title={s1: Simple test-time scaling},
author={Niklas Muennighoff and Zitong Yang and Weijia Shi and Xiang Lisa Li and Li Fei-Fei and Hannaneh Hajishirzi and Luke Zettlemoyer and Percy Liang and Emmanuel Candes and Tatsunori Hashimoto},
booktitle={Workshop on Reasoning and Planning for Large Language Models},
year={2025},
url={https://openreview.net/forum?id=LdH0vrgAHm}
}

@misc{openr1_mixture_of_thoughts_2025,
  title        = {Mixture-of-Thoughts},
  author       = {Open R1 HuggingFace},
  howpublished = {\url{https://huggingface.co/datasets/open-r1/Mixture-of-Thoughts}},
  year         = {2025},
}

@article{gpt1,
  title={Improving Language Understanding by Generative Pre-Training},
  author={Radford, Alec and Narasimhan, Karthik and Salimans, Tim and Sutskever, Ilya},
  journal={OpenAI technical report},
  year={2018},
  url={https://cdn.openai.com/research-covers/language-unsupervised/language_understanding_paper.pdf}
}

@article{gpt2,
  title={Language Models are Unsupervised Multitask Learners},
  author={Radford, Alec and Wu, Jeffrey and Child, Rewon and Luan, David and Amodei, Dario and Sutskever, Ilya},
  journal={OpenAI technical report},
  year={2019},
  url={https://cdn.openai.com/better-language-models/language_models_are_unsupervised_multitask_learners.pdf}
}

@article{gpt3,
  title={Language Models are Few-Shot Learners},
  author={Brown, Tom and Mann, Benjamin and Ryder, Nick and Subbiah, Melanie and others},
  journal={arXiv preprint arXiv:2005.14165},
  year={2020},
  url={https://arxiv.org/abs/2005.14165}
}

@inproceedings{
sun2023scorebased,
title={Score-based Continuous-time Discrete Diffusion Models},
author={Haoran Sun and Lijun Yu and Bo Dai and Dale Schuurmans and Hanjun Dai},
booktitle={The Eleventh International Conference on Learning Representations },
year={2023},
url={https://openreview.net/forum?id=BYWWwSY2G5s}
}

@inproceedings{
hu2022lora,
title={Lo{RA}: Low-Rank Adaptation of Large Language Models},
author={Edward J Hu and yelong shen and Phillip Wallis and Zeyuan Allen-Zhu and Yuanzhi Li and Shean Wang and Lu Wang and Weizhu Chen},
booktitle={International Conference on Learning Representations},
year={2022},
url={https://openreview.net/forum?id=nZeVKeeFYf9}
}

@misc{deepseekai2025deepseekr1incentivizingreasoningcapability,
      title={DeepSeek-R1: Incentivizing Reasoning Capability in LLMs via Reinforcement Learning}, 
      author={DeepSeek-AI},
      year={2025},
      eprint={2501.12948},
      archivePrefix={arXiv},
      primaryClass={cs.CL},
      url={https://arxiv.org/abs/2501.12948}, 
}

@misc{zhao2025d1scalingreasoningdiffusion,
      title={d1: Scaling Reasoning in Diffusion Large Language Models via Reinforcement Learning}, 
      author={Siyan Zhao and Devaansh Gupta and Qinqing Zheng and Aditya Grover},
      year={2025},
      eprint={2504.12216},
      archivePrefix={arXiv},
      primaryClass={cs.CL},
      url={https://arxiv.org/abs/2504.12216}, 
}

@misc{alain2016variancereductionsgddistributed,
      title={Variance Reduction in SGD by Distributed Importance Sampling}, 
      author={Guillaume Alain and Alex Lamb and Chinnadhurai Sankar and Aaron Courville and Yoshua Bengio},
      year={2016},
      eprint={1511.06481},
      archivePrefix={arXiv},
      primaryClass={stat.ML},
      url={https://arxiv.org/abs/1511.06481}, 
}

@misc{lambert2025tulu3pushingfrontiers,
      title={Tulu 3: Pushing Frontiers in Open Language Model Post-Training}, 
      author={Nathan Lambert and Jacob Morrison and Valentina Pyatkin and Shengyi Huang and Hamish Ivison and Faeze Brahman and Lester James V. Miranda and Alisa Liu and Nouha Dziri and Shane Lyu and Yuling Gu and Saumya Malik and Victoria Graf and Jena D. Hwang and Jiangjiang Yang and Ronan Le Bras and Oyvind Tafjord and Chris Wilhelm and Luca Soldaini and Noah A. Smith and Yizhong Wang and Pradeep Dasigi and Hannaneh Hajishirzi},
      year={2025},
      eprint={2411.15124},
      archivePrefix={arXiv},
      primaryClass={cs.CL},
      url={https://arxiv.org/abs/2411.15124}, 
}

@misc{dllm,
    author = {Zhanhui Zhou and Lingjie Chen and Hanghang Tong and Dawn Song},
    title = {dLLM: Simple Diffusion Language Modeling},
    year = {2025},
    publisher = {GitHub},
    journal = {GitHub repository},
    howpublished = {\url{https://github.com/ZHZisZZ/dllm}},
}

@misc{soviany2022curriculumlearningsurvey,
      title={Curriculum Learning: A Survey}, 
      author={Petru Soviany and Radu Tudor Ionescu and Paolo Rota and Nicu Sebe},
      year={2022},
      eprint={2101.10382},
      archivePrefix={arXiv},
      primaryClass={cs.LG},
      url={https://arxiv.org/abs/2101.10382}, 
}

@misc{bie2025llada20scalingdiffusionlanguage,
      title={LLaDA2.0: Scaling Up Diffusion Language Models to 100B}, 
      author={Tiwei Bie and Maosong Cao and Kun Chen and Lun Du and Mingliang Gong and Zhuochen Gong and Yanmei Gu and Jiaqi Hu and Zenan Huang and Zhenzhong Lan and Chengxi Li and Chongxuan Li and Jianguo Li and Zehuan Li and Huabin Liu and Lin Liu and Guoshan Lu and Xiaocheng Lu and Yuxin Ma and Jianfeng Tan and Lanning Wei and Ji-Rong Wen and Yipeng Xing and Xiaolu Zhang and Junbo Zhao and Da Zheng and Jun Zhou and Junlin Zhou and Zhanchao Zhou and Liwang Zhu and Yihong Zhuang},
      year={2025},
      eprint={2512.15745},
      archivePrefix={arXiv},
      primaryClass={cs.LG},
      url={https://arxiv.org/abs/2512.15745}, 
}

\clearpage
\appendix

\section{Proof of the Weighted SFT Loss}
\label{app:der}

In order to prove this theorem, we will first prove two lemmas, and then proceed to prove the main theorem. Our proof follows that of~\citealp{ou2025absorbingdiscretediffusionsecretly}, with the key difference that our $Q$ matrix incorporates varying masking rates $\beta$, whereas~\citealp{ou2025absorbingdiscretediffusionsecretly} assumes all masking rates are identical.

Here, we assume a fixed matrix $Q$ during the derivation; however, in practical entropy-driven masking, $Q$ functions more as a heuristic variance-control scheme. In Appendix~\ref{app:rigor}, we provide a fully theory-compliant version of \ours\ and find that it also outperforms SFT.

\subsection{Proof of Lemmas}
Let the original response of the model be $x_0$. At a given timestep $t$, 
let the partially masked response be $x_t$, and $[M]$ denotes the mask. 
$\beta_{x_t^i}$ denotes the masking rate corresponding to the $i$-th token. 
The definition of $\bar{f}$ can be found in Equation~\ref{equ:f_bar}.

\begin{lemma}\label{lem:masking}
Let $x_t^i = [M]$ and ${x'}_t^i \neq [M]$. Then we have
\begin{align}
    &\frac{p_t(x_t^1,\dots,{x'}^i_t,\dots,x_t^d)}{p_t(x_t^1,\dots,x^i_t,\dots,x_t^d)} \nonumber \\
    &\quad = \frac{\exp(-\beta_{x_t^i}\,\bar{f}(t))}{1-\exp(-\beta_{x_t^i}\,\bar{f}(t))}\; p_0({x'}_t^i \mid x_t^{\mathrm{unmasked}}).
\end{align}
On the other hand, if $x_t^i \neq [M]$, then
\begin{equation}
\frac{p_t(x_t^1,\dots,{x'}^i_t,\dots,x_t^d)}{p_t(x_t^1,\dots,x^i_t,\dots,x_t^d)} = 0;
\end{equation}
\end{lemma}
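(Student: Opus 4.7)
The plan is to compute $p_t(x_t)$ in closed form from the factorized forward kernel, and then read off the ratio by comparing the two configurations.

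First, I would solve $P_{t\mid 0}=\exp(\bar f(t)Q)$ for $Q$ as in Equation~\ref{eq:q}. Because each position evolves independently under a two-state ``survive vs.\ absorb'' chain whose rate at position $i$ is $\beta_i$ (the masking rate associated to the token living at that position), the kernel factorizes, and each single-site factor equals $e^{-\beta_i\bar f(t)}$ on survival ($x_t^i=x_0^i$), $1-e^{-\beta_i\bar f(t)}$ on absorption ($x_t^i=\mathbf{M}$), and $0$ otherwise. Marginalizing $p_t(x_t)=\sum_{x_0}p_0(x_0)\prod_i p_{t\mid 0}(x_t^i\mid x_0^i)$, I would split positions into the unmasked set $U_t=\{j:x_t^j\neq\mathbf{M}\}$ and the masked set $M_t$: the kernel forces $x_0^j=x_t^j$ on $U_t$ and leaves $x_0^k$ free on $M_t$. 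Pulling the $t$-dependent factors out of the sum yields
\begin{equation*}
p_t(x_t)=\Big(\prod_{j\in U_t}e^{-\beta_j\bar f(t)}\Big)\Big(\prod_{k\in M_t}\big(1-e^{-\beta_k\bar f(t)}\big)\Big)\,p_0\!\left(x_0^{U_t}=x_t^{U_t}\right).
\end{equation*}

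For the first case ($x_t^i=\mathbf{M}$, ${x'}_t^i\neq\mathbf{M}$), position $i$ moves from $M_t$ into $U_{t'}=U_t\cup\{i\}$, swapping the factor $(1-e^{-\beta_i\bar f(t)})$ for $e^{-\beta_i\bar f(t)}$ and producing the prefactor $e^{-\beta_i\bar f(t)}/(1-e^{-\beta_i\bar f(t)})$. The remaining marginal ratio collapses via Bayes' rule to the conditional $p_0({x'}_t^i\mid x_t^{\mathrm{unmasked}})$, matching the stated formula (with $\beta_{x_t^i}$ identified as the masking rate of the token at position $i$). For the second case, where $x_t^i\neq\mathbf{M}$, I would read the ``$=0$'' claim at the level of the reverse-time dynamics: under Equation~\ref{eq:q}, $Q(c,c')=0$ for any two non-mask tokens $c,c'$, so the effective reverse rate $\bar Q(x_t,x_t')\propto Q(x_t',x_t)\,p_t(x_t')/p_t(x_t)$ vanishes identically once the differing position is already unmasked, which is the concrete-score convention used by \citep{ou2025absorbingdiscretediffusionsecretly}.

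The main obstacle I anticipate is case~2, since the bare probability ratio need not be literally zero when $x_t^i\neq\mathbf{M}$; the claim only makes sense once we package it together with the forward generator entry that multiplies it in the reverse rate, where the absorbing structure of $Q$ enforces the zero. A secondary bookkeeping matter is the notation $\beta_{x_t^i}$ itself: since $x_t^i=\mathbf{M}$ in the first case, this must be read as the rate of the underlying $x_0^i$, which, consistently with Section~\ref{sec:3.3}, we treat as a position-indexed quantity computed once from the initial fully-masked forward pass.
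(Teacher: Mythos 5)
Your proof is correct and follows essentially the same route as the paper: derive the factorized single-site forward kernel (you via the matrix exponential of $Q$, the paper via a product-integral limit over a partition of $(0,t)$), marginalize over $x_0$ to obtain $p_t(x_t)$ as a product of per-position survival/absorption factors times $p_0(x_t^{\mathrm{unmasked}})$, and read the ratio off by comparing the two configurations. Your caveat on the second case is well taken --- the paper's proof never addresses it, and the bare marginal ratio is indeed not literally zero in general (e.g.\ when ${x'}_t^i=[M]$), so the claim should be read, exactly as you say, as a statement about the reverse rate where it is multiplied by the vanishing forward generator entry $Q_t(x_t',x_t)$ of the absorbing chain.
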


Intuitively, this theorem characterizes the exact transition probabilities during the training process. It also establishes a connection between the concrete score function and the clean data distribution via a scalar factor.

\begin{proof}

We first calculate $p_{t|0}(x_t|x_0)$:

Based on the definition of $Q_t$, we can easily see that $p_{t|0}(x_t|x_0)=0$ if $x_t\neq x_0$ and $x_t\neq [M]$

Now we divide the time interval $(0,t)$ into $n$ intervals, namely $(0,s_1),(s_1,s_2),\cdots,(s_{n-1},t)$. Here we denote $s_0=0,s_n=t$

then the total probability of $p_{t|0}(x_t=x_0|x_0)$ is 
\begin{align*}
    &\prod_{i=0}^{n-1}p_{s_{i+1}|s_i}(x_{s_{i+1}}=x_{s_i}|x_{s_i}) \\ 
    &=\prod_{i=0}^{n-1}\Big(1+Q_{s_i}(x_{s_i},x_{s_i})(s_{i+1}-s_i) \nonumber \\
    &\quad +o(s_{i+1}-s_i)\Big) \\ 
    &=\exp\Big(\sum_{i=0}^{n-1}\ln\Big(1+Q_{s_i}(x_{s_i},x_{s_i}) \nonumber \\
    &\quad \times (s_{i+1}-s_i) +o(s_{i+1}-s_i)\Big)\Big) \\
    &=\exp\Big(\sum_{i=0}^{n-1}[f(s_i)Q(x_{s_i},x_{s_i})(s_{i+1}-s_i) \nonumber \\
    &\quad +o(s_{i+1}-s_i)]\Big)
\end{align*}
Now we select a series of  appropriate $n$ and $s_1,s_2,\dots,s_{n-1}$ such that while $n\rightarrow +\infty$ , $\max(s_{i+1}-s_i)\rightarrow 0$. Now by recalling the definition of integral, we obtain: 
\begin{align*}
    &\exp\Big(\sum_{i=0}^{n-1}[f(s_i)Q(x_{s_i},x_{s_i})(s_{i+1}-s_i) \nonumber \\
    &\quad +o(s_{i+1}-s_i)]\Big) \rightarrow \exp(-\beta_{x_0}\bar{f}(t))
\end{align*}

Thus, we have \[
p_{t|0}(x_t|x_0) = \begin{cases}
\exp(-\beta_{x_0}\bar{f}(t)) \\
\quad \text{if } x_t = x_0 \\
1-\exp(-\beta_{x_0}\bar{f}(t)) \\
\quad \text{if } x_t = [M] \\
0 \\
\quad \text{otherwise}
\end{cases}
\]

Now, based on the above result, recalling the independence of different dimensions in the diffusion process, we can get a stronger result: 

\begin{lemma}Suppose that we have $x_t$ in which the tokens $a_1,a_2,...,a_k$ are masked 
and $b_1,b_2,...,b_{n-k}$ are unmasked; then:
\begin{align}
p_t(x_t) &= \prod_{i=1}^k(1-\exp(-\beta_{a_i}\bar{f}(t))) \nonumber \\
&\quad \times \prod_{j=1}^{n-k}\exp(-\beta_{b_j}\bar{f}(t))p_0(x_t^{\text{unmasked}})
\end{align}
\end{lemma}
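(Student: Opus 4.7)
The plan is to combine the single-position formula already established in the proof of Lemma 1 with the across-position independence of the diffusion (which the authors invoke immediately before the lemma), and then marginalize over $x_0$.

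First, I would record the factorization that makes the argument go through. Because the generator $Q$ in Equation~\ref{eq:q} acts on the single-token alphabet and the coordinates of the sequence evolve under independent copies of this generator, the full-sequence transition kernel factorizes as
\begin{equation*}
p_{t\mid 0}(x_t\mid x_0)=\prod_{i=1}^{n} p_{t\mid 0}^{(i)}\bigl(x_t^{i}\mid x_0^{i}\bigr),
\end{equation*}
and the derivation inside the proof of Lemma 1 already supplies each factor: $p_{t\mid 0}^{(i)}(x_t^{i}\mid x_0^{i})$ equals $\exp(-\beta_i\bar{f}(t))$ when $x_t^i=x_0^i$, equals $1-\exp(-\beta_i\bar{f}(t))$ when $x_t^i=[M]$, and vanishes otherwise.

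Next, I would write the marginal $p_t(x_t)=\sum_{x_0} p_{t\mid 0}(x_t\mid x_0)\,p_0(x_0)$ and process the two kinds of positions separately. For each unmasked coordinate $b_j$, the single-position factor vanishes unless $x_0^{b_j}=x_t^{b_j}$, so the sum collapses in that coordinate and contributes the constant $\exp(-\beta_{b_j}\bar{f}(t))$. For each masked coordinate $a_i$, the factor is $1-\exp(-\beta_{a_i}\bar{f}(t))$; the key observation is that, because $\beta_{a_i}$ is pinned to the position (as assigned by the first forward pass in \ours), this value is independent of the summed variable $x_0^{a_i}$ and can be pulled outside the sum. What is left is $\sum_{x_0^{A}} p_0\!\left(x_0^{A},\,x_t^{B}\right)$, which is exactly the marginal $p_0(x_t^{\text{unmasked}})$. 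Assembling the pulled-out factors delivers the stated identity.

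The main obstacle is the across-position factorization step: one must justify that the joint generator on the sequence space is the tensor sum of the per-position generators, and hence the matrix exponential factorizes over coordinates. Once this is granted (and it is consistent with the scalar-in-time assumption $Q_t=f(t)Q$ applied coordinatewise), the rest is bookkeeping. A secondary subtlety to flag is notational: the formula only makes sense when $\beta_{a_i}$ is interpreted as the position-indexed masking rate (the one produced by the algorithm), since otherwise the masked-coordinate factor would still depend on $x_0^{a_i}$ and could not be pulled out of the marginalization.
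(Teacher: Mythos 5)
Your proof is correct and follows essentially the same route as the paper: marginalize over $x_0$, factorize the transition kernel across coordinates using the per-position formula from the first lemma, collapse the sum at unmasked positions and pull the constant factors out at masked positions, leaving the marginal $p_0(x_t^{\text{unmasked}})$. Your observation that $\beta$ must be interpreted as position-indexed (rather than indexed by the summed token value $x_0^{a_i}$) for the masked-coordinate factor to be pulled outside the marginalization is a genuine subtlety that the paper's own proof passes over silently.
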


\begin{proof}
Exploiting the independence across dimensions in the diffusion process, we may, without loss of generality, set $a_i = i$ and $b_i = k+i$. Equivalently, this corresponds to masking the first $k$ tokens while leaving the remaining $n-k$ tokens unmasked.

Then we obtain 
\begin{align*} 
&p_t([M]\dots[M]x_t^{k+1}\dots x_t^n)\\ 
&=\sum_{x_0}p_{t|0}([M]\dots[M]x_t^{k+1}\dots x_t^n|x_0)p_0(x_0)\\
&=\sum_{x_0^1\dots x_0^n}\prod_{i=1}^k p^k_{t|0}([M]|x_0^{k})\nonumber \\ &\quad \times\prod_{i=k+1}^np_{t|0}( x_t^k|x_0^k)p_0(x_0^1,\dots x_0^n)\\
&=\sum_{x_0^1\dots x_0^n}\prod_{i=1}^k (1-\exp(-\beta_i\bar{f}(t))) \nonumber \\
&\quad \times \prod_{i=k+1}^n\exp(-\beta_i\bar{f}(t))p_0(x_0^1,\dots x_0^n)\\
&=\sum_{x_0^{k+1}\dots x_0^n}\prod_{i=1}^k (1-\exp(-\beta_i\bar{f}(t))) \nonumber \\
&\quad \times \prod_{i=k+1}^n\exp(-\beta_i\bar{f}(t))p_0(x_0^{k+1},\dots x_0^n)
\end{align*}
Recalling the definition, $p_0(x_0^{k+1}, \dots, x_0^n)$ can be expressed as $p_0(x_t^{\text{unmasked}}),$ and the desired result follows.

\end{proof}

Now we append $x'_t$ and $x_t$ to the above result and we obtain the result of Lemma~\ref{lem:masking}.

\end{proof}

\subsection{Proof of the Main Theorem}

\maintheorem* 

\begin{proof}
Now we consider the Denoising Score Entropy, introduced in \cite{lou2024discretediffusionmodelingestimating}. DSE  loss provides a principled training objective for diffusion models by measuring the discrepancy between the model score function and the true score of the perturbed data distribution.

The DSE loss can be written as:

\begin{align*}
&\int_{0}^T\mathbb{E}_{x_t\sim p_{t|0}(x_t|x_0)} \nonumber \\
&\quad \times \sum_{x_t'\neq x_t}Q_t(x_t',x_t)\Big(s_{\theta}(x,t)_{x_t'} \nonumber \\
&\quad -\frac{p_{t|0}(x_t'|x_0)}{p_{t|0}(x_t|x_0)}\log s_{\theta}(x_t,t)_{x_t'}+C\Big)dt
\end{align*}

\[
C = K\!\left(\frac{p_{t|0}(x_t'|x_0)}{p_{t|0}(x_t|x_0)}\right), 
\quad K(a) = a \log a - a,
\]
Here $C$ is a constant irrelevant to the loss.  
The function $s_{\theta}(x_t,t)$ denotes a score network used to estimate the ratio$\frac{p_t(x_t')}{p_t(x_t)}.$

In practice, we often estimate $s_{\theta}(x,t)_{x_t'}$ using 
\begin{align*}
s_{\theta}(x,t)_{x_t'} &\approx p_t(x_t^1\dots {x'}^i_t \dots x_t^d) \nonumber \\
&\quad \times \frac{1}{p_t(x_t^1\dots x^i_t\dots x_t^d)}
\end{align*}

We denote $\tilde{Q}$ as the reverse transition matrix corresponding to $Q$. Then, the reverse process satisfies \[
\tilde{Q}_t(x_t,x_t') = \begin{cases}
\frac{p_t(x_t')}{p_t(x_t)} Q_t(x_t',x_t) 
\ \  \text{if } x_t' \neq x_t \\
-\sum_{k\neq x_t}\tilde{Q_t}(x_t,k)
 \ \ \text{if } x_t'=x_t
\end{cases}
\]

Now we use Lemma~\ref{lem:masking} and calculate the three parts of the loss 
separately:

Using the  fact that $$\sum p_0({x'}_t^i|x_t^{\text{unmasked}})=1$$ we obtain:

\begin{align*}
&\int_{0}^T\mathbb{E}_{x_t\sim p_{t|0}(x_t|x_0)}\sum_{x_t'\neq x_t}Q_t(x_t',x_t)s_{\theta}(x,t)_{x_t'}dt \\
&=\int_{0}^T\mathbb{E}_{x_t\sim p_{t|0}(x_t|x_0)}
\sum_{x_t^i=[M]}\beta_if(t) \nonumber \\
&\quad \times \frac{\exp(-\beta_{x_t^i}\bar{f}(t))}{1-\exp(-\beta_{x_t^i}\bar{f}(t))}dt
\end{align*}

Using the properties of $-\frac{p_{t|0}(x_t'|x_0)}{p_{t|0}(x_t|x_0)}$ we obtain:

\begin{align*}
& \int_{0}^T\mathbb{E}_{x_t\sim p_{t|0}(x_t|x_0)}\sum_{x_t^i\neq x_t}Q_t(x_t',x_t) \\
&\quad \times \left(-\frac{p_{t|0}(x_t'|x_0)}{p_{t|0}(x_t|x_0)}\log s_{\theta}(x_t,t)_{x_t'}\right)dt\\ 
&= \int_{0}^T\mathbb{E}_{x_t\sim p_{t|0}(x_t|x_0)}\sum_{x_t^i=[M],j\neq [M]}-f(t)\beta_i \\
&\quad \times \mathbf{1}[x_0^i=j]\log s_{\theta}(x_t,t)_{x_t'}dt 
\end{align*}

Recalling that $K(a) = a \log a - a$ with $K(0) = 0$, 
we can safely ignore the terms $\mathbf{1}[x_0^i = j]$ and $j \neq [M]$.

\begin{align*}
&\int_{0}^T \mathbb{E}_{x_t\sim p_{t|0}(x_t\mid x_0)} \sum_{x_t'\neq x_t} Q_t(x_t',x_t) \nonumber \\
&\quad \times K\!\left(\frac{p_{t|0}(x_t'\mid x_0)}{p_{t|0}(x_t\mid x_0)}\right)\, dt \\
&= \int_{0}^T
\mathbb{E}_{x_t\sim p_{t|0}(x_t\mid x_0)}
\sum_{x_t^i=[M],j\neq [M]}\beta_i f(t) \\
&\quad \times
K\!\left(
\frac{\exp(-\beta_{x_t^i}\bar{f}(t))}
     {1-\exp(-\beta_{x_t^i})\bar{f}(t)}
\mathbf{1}[x_0^i=j]
\right) dt \\
&= \int_{0}^T
\mathbb{E}_{x_t\sim p_{t|0}(x_t\mid x_0)}
\sum_{x_t^i=[M]}\beta_i f(t) \nonumber \\
&\quad \times \frac{\exp(-\beta_{x_t^i}\bar{f}(t))}{1-\exp(-\beta_{x_t^i}\bar{f}(t))} \nonumber \\
&\quad \times \left(\log \frac{\exp(-\beta_{x_t^i}\bar{f}(t))}{1-\exp(-\beta_{x_t^i}\bar{f}(t))}-1\right)dt 
\end{align*}

By adding them all together, especially combining the first and third term, we get

\begin{align*}
&\int_{0}^T\mathbb{E}_{x_t\sim p_{t|0}(x_t|x_0)}\sum_{x_t^i=[M]}\frac{-\beta_if(t)\exp(-\beta_i\bar{f}(t))}{1-\exp(-\beta_i\bar{f}(t))} \\
&\times \log\left(\frac{\exp(-\beta_i\bar{f}(t))}{1-\exp(-\beta_i\bar{f}(t))}q_{\theta}(x_0^i|x_t^{\text{unmasked}})\right) dt
\end{align*}

Now we apply a change of variables. By defining
\[
t_i = 1 - \exp\!\big(-\beta_{x_t^i} \, \bar{f}(t)\big),\]

we obtain 
\[dt_i = \beta_{x_t^i}f(t)\exp\!\big(-\beta_{x_t^i} \, \bar{f}(t)\big)dt
\]
and the loss can be rewritten as:

\begin{align*}
&\sum_{i}\int_0^1\mathbb{E}_{x\sim p_{t|0}(x_t|x_0)}\Big[-\frac{1}{t_i}\mathbf{1}[x_t^i=[M]] \nonumber \\
&\quad \times \log(q_{\theta}(x_0^i|x_t^{\text{unmasked}}))\Big]dt_i+C
\end{align*}

Here, $x_{t_i}$ denotes the distribution in which $x^i$ is masked with probability $t_i$. 

Recall that 
\[
t_i = 1 - \exp(-\beta_{x^i} \, \bar{f}(t)).
\]

Suppose we have a reference value $\beta_{\text{ref}}$ and define 
\[
t = 1 - \exp(-\beta_{\text{ref}} \, \bar{f}(t)).
\] 
Then we can express $t_i$ as

\begin{align*}
t_i &=  1 - \exp(-\beta_{x^i} \, \bar{f}(t)) \\
&=1-\exp(-\beta_{x^i}(\frac{1}{\beta_{ref}}\log(1-t)))\\
&=1 - (1-t)^{\frac{\beta_{x^i}}{\beta_{\text{ref}}}}
\end{align*}

In practice, one can sample $t$ and calculate $t_i$ using the above formula. 
\end{proof}

\section{Algorithm Pipeline}
\label{app:algo}
In Algorithm~\ref{alg:ours_part1} and Algorithm~\ref{alg:ours_part2}, we provide a detailed description of the implementation of \ours.

\begin{algorithm}[]
\caption{Creating Betas from Entropy}
\label{alg:ours_part1}

\KwIn{model $\mathcal{M}$, input\_ids $\mathcal{I}$,  mask\_token\_id $m$}
\KwOut{Masking parameter $\beta$}

\SetKwFunction{EstimateBetas}{EstimateRate}

\BlankLine
\textbf{Function} \EstimateBetas{$\mathcal{M}, \mathcal{I},  M, m$}: \\
\Indp
\tcp{mask out the answer}
    \For{$i \gets 1$ \KwTo $|\mathcal{I}|$}{
    $\mathcal{I'}^i \gets (1 - M^i) \cdot \mathcal{I}^i + M^i \cdot m$\; 
    } 
    $z \leftarrow \mathcal{M}(\mathcal{I'}).\text{logits}$\;
    \tcp{calculate entropy}
     \For{$i \gets 1$ \KwTo $|\mathcal{I}|$}{
    $H_i \leftarrow -\sum_j \text{softmax}(z_{ij}) \cdot \text{log\_softmax}(z_{ij})$\;  
    $\beta_i \gets M^i\sqrt{H_i}$\;
    }
    \Return $\beta$\;
\Indm
\end{algorithm}

\clearpage
\begin{algorithm}[t!]
\caption{Weighted Entropy-driven Fine-Tuning}
\label{alg:ours_part2}

\SetKwFunction{EstimateBetas}{EstimateRate}
\SetKwFunction{ComputeLoss}{ComputeLoss}
\KwIn{model $\mathcal{M}$, input\_ids $\mathcal{I}$, labels $l$, mask\_token\_id $m$, prompt lengths $L$}
\KwOut{Loss $\mathcal{L}$}

\BlankLine
\textbf{Function} \ComputeLoss{$\mathcal{M},\mathcal{I}, m, L$}: \\
\Indp
    $t \sim \mathcal{U}(0,1)$\;
    \tcp{mask the model's answer}
    \For{$i \gets 1$ \KwTo $|\mathcal{I}|$}{
    $M^i = \mathbf{1}_{\{\,i > L\,\}}$ 
}
    $\beta \leftarrow \EstimateBetas(\mathcal{M}, \mathcal{I}, M,  m)$\;
    $\beta_{ref} \leftarrow \frac{\sum_i \beta_i \cdot M_i}{\sum_i M_i}$ \;

    \tcp{Compute the masking probability for each token}
     \For{$i \gets 1$ \KwTo $|\mathcal{I}|$}{
    $t_i\leftarrow 1-(1-t)^{\frac{\beta_i}{\beta_{ref}}}$\;
    $M'^i \sim \text{Bernoulli}(t_i)$ \;
    }
    $\mathcal{I'} \leftarrow (1 - M') \cdot \mathcal{I} + M' \cdot m$\;
    $z \leftarrow \mathcal{M}(\mathcal{I'}).\text{logits}$\;
    $S \gets \sum_{i} \mathbf{1}_{\{M'^i=1\}}\frac{1}{t_i} \cdot$   
    \Indp $\text{CrossEntropy}(z_i, l_i)$\;
    \Indm
    $N \gets \sum_i \mathbf{1}_{\{M'^i=1\}}$\;
    $\mathcal{L} \gets S / N$\;
    \Return $\mathcal{L}$\;
\Indm
\end{algorithm}

\section{Hyperparameters and Detailed Settings}
\label{app:hyper}

\subsection{Training Hyperparameters and Settings}
Our training is divided into two settings: LoRA fine-tuning and full parameter fine-tuning.
LoRA fine-tuning is conducted on 4 H100 GPUs, while full parameter fine-tuning is conducted on 8 H100 GPUs.

For LoRA fine-tuning, experiments are conducted with a learning rate of $1\times 10^{-5}$ and a total of 20 training epochs.
We apply gradient accumulation with $4$ steps and used an effective per-device training batch size of $1$.
To reduce memory cost, we employ LoRA with rank $r=128$, scaling factor $\alpha=256$, and a dropout rate of $0.05$.
All hyperparameters, including model architecture, optimization, LoRA configuration, and training settings, are listed in Table~\ref{tab:hyperparams}.

\begin{table*}[htbp]
\centering
\begin{tabularx}{\textwidth}{@{}lX@{}}
\toprule
\textbf{Category} & \textbf{Hyperparameter = Value} \\
\midrule
\textbf{Optimization} & 
optimizer$\ =\ $adamw\_torch; weight\_decay$\ =0.1$; \\
& learning\_rate = 1e-5;  lr\_scheduler\_type = linear; \\
& adam\_beta1 = 0.9; adam\_beta2 = 0.999; adam\_epsilon = 1e-8; \\
& init\_fn$\ =\ $mitchell;
init\_std$\ =0.02$; max\_grad\_norm$\ =1$ \\
\midrule
\textbf{LoRA Configuration} & 
LoRA$\ =\ $\texttt{True}; $r=128$; lora\_alpha$\ =256$; lora\_dropout$\ =0.05$; \\
& target\_modules $=$ ["q\_proj", "k\_proj", "v\_proj"] \\
\midrule
\textbf{Training} &  num\_train\_epochs$\ =20$;  per\_device\_train\_batch\_size$\ =1$; \\ & gradient\_accumulation\_steps$\ =4$; precision = bf16; \\ &  max\_sequence\_length = 4096; seed $\ =42$ \\
\bottomrule
\end{tabularx}
\caption{Hyperparameters used in LoRA training.}
\label{tab:hyperparams}
\end{table*}

For full parameter fine-tuning, we train for 5 epochs with a learning rate of $2\times 10^{-5}$.
Other settings are detailed in Table~\ref{tab:hyperparams_lora}.

\begin{table*}[htbp]
\centering
\begin{tabularx}{\textwidth}{@{}lX@{}}
\toprule
\textbf{Category} & \textbf{Hyperparameter = Value} \\
\midrule
\textbf{Optimization} & 
optimizer$\ =\ $adamw\_torch; weight\_decay$\ =0$; \\
& learning\_rate = 2e-5;  lr\_scheduler\_type = cosine; \\
& adam\_beta1 = 0.9; adam\_beta2 = 0.999; adam\_epsilon = 1e-8; \\
& init\_fn$\ =\ $mitchell;
init\_std$\ =0.02$; max\_grad\_norm$\ =1$ \\
\midrule
\textbf{Training} &  num\_train\_epochs$\ =5$;  per\_device\_train\_batch\_size$\ =4$; \\ & gradient\_accumulation\_steps$\ =1$; precision = bf16; \\ &  max\_sequence\_length = 4096; seed $\ =42$ \\
\bottomrule
\end{tabularx}
\caption{Hyperparameters used in full parameter training.}
\label{tab:hyperparams_lora}
\end{table*}

\subsection{Evaluation Hyperparameters and Settings}
In the evaluation, we set the block length to 32. For the two smaller datasets, Sudoku and Countdown, the generation length is set to 512; for the two larger datasets, MATH500 and GSM8K, the generation length is set to 256. The number of diffusion steps is always set to half of the generation length, and we use the low-confidence remasking method. Regarding the specific partitioning of validation sets for these four datasets, we follow the settings in \texttt{wd1}~\citep{tang2025wd1weightedpolicyoptimization}.

\section{Motivation for Using Entropy-based Methods as the Weight Coefficient}
\label{app:mot}

In this section, we provide an intuitive perspective to justify the choice of \emph{entropy} as the coefficient~$\beta$, as this section is intended to provide motivation rather than a rigorous proof.

In brief, our approach of using entropy as an importance weight helps reduce the variance of the loss estimation. Specifically, prior work has demonstrated that importance sampling achieves minimum variance when the sampling proposal is proportional to the L2 norm of the gradient. Through reasonable approximation, we found that token-wise entropy is approximately proportional to the L2 norm of the gradient, which motivates our choice of this metric. A detailed discussion is provided below.

In~\cite{alain2016variancereductionsgddistributed}, the authors proved that importance sampling has minimum variance when
the sampling proposal is proportional to the $L_2$-norm of the gradient.

Recall that in a model, $p$ is generated by applying softmax to the output logits. More specifically, $$p_k=\frac{e^{z_k}}{\sum_je^{z_j}}$$.

Here $z$ is the logits generated by the model.

Let $c$ be the label and $p_c$ be the probability of this token.
Then we have $$\frac{\partial \log p_c}{\partial z_k}=\left\{
\begin{array}{ll}
-p_k & \text{if } k\neq c \\
1-p_c & \text{if } k =c
\end{array}
\right.$$

In the following discussion, we assume that the base model generates the correct token with relatively high probability and other tokens with relatively low probability. Therefore, $L_2$-norm of $\frac{\partial \log p_c}{\partial \mathbf{z}}$ can be approximated as $1-p_c$. 
Since $\frac{\partial \log p_c}{\partial \mathbf{z}}$ constitutes a key component of the loss gradient, theoretical insights suggest that selecting an importance sampling distribution that is positively correlated with this term might help reduce the variance of the loss, thus improving the stability and effectiveness of the training.
We now briefly analyze why entropy can serve this purpose.

The entropy function $H(\mathbf{p})$ admits the following lower bound:
\begin{align}
H(\mathbf{p}) &\ge - p_c \log p_c +(1 - p_c)\\
&(\log(n - 1)-\log(1-p_c)) \notag
\end{align}
where $n$ denotes the vocabulary size. Since the term $-p_c \log p_c$ satisfies
\begin{equation}
- p_c \log p_c < p_c(1 - p_c),
\end{equation}
it's negligible compared to $(1 - p_c)\log(n - 1)$. We may safely ignore it for approximation purposes.

Moreover, we empirically observe that the variation in incorrect predictions between different samples is relatively small.
Under this assumption, $H(\mathbf{p})$ is positively correlated with $(1 - p_c)$.

On the other hand, using entropy $H(\mathbf{p})$ is superior to directly using $(1 - p_c)$ in that it preserves richer information about the underlying distribution.
Due to the prevalence of synonyms and semantically similar tokens, the true probability distribution is unlikely to be unimodal.
Instead, it is more reasonable to expect the probability mass to be concentrated on a small subset of tokens.
In this respect, entropy $H(\mathbf{p})$ captures such structural information significantly better than $(1 - p_c)$.
In pratice, we employ a power function based on $\sqrt{H(\mathbf{p})}$ as the importance weighting function.

\section{SFT Role in Subsequent RL}
\label{app:rl}

\begin{figure}[h]
    \centering
    \includegraphics[width=0.6\columnwidth]{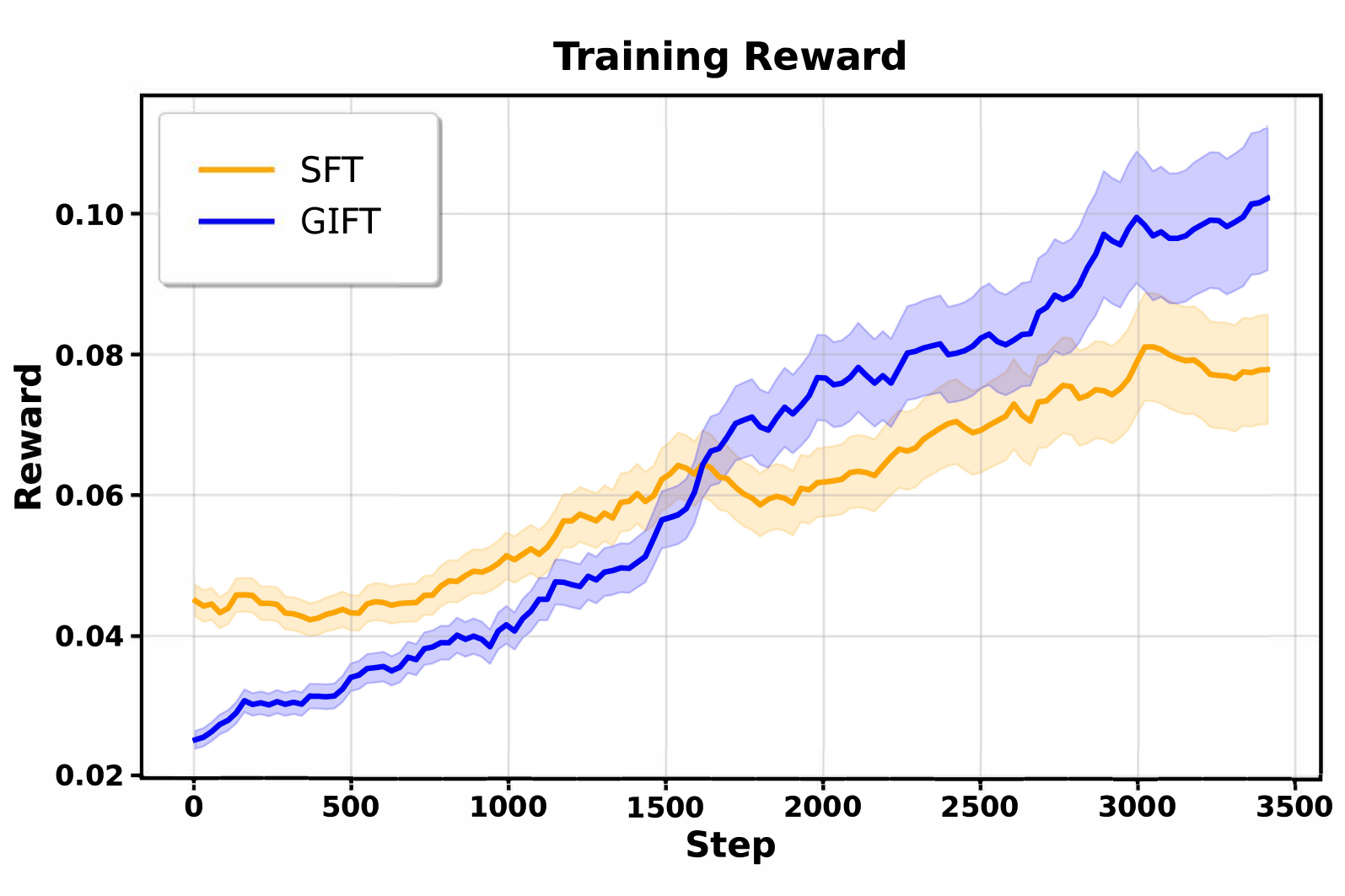} 
    \caption{Reward curves of models cold-started with \ours \ and SFT during subsequent reinforcement learning training. The curves are smoothed using a time-weighted EMA. As shown, the model initialized with \ours \ achieves higher rewards.  }
    \label{fig:rl}
\end{figure}

Reinforcement learning (RL) post-training has become a crucial stage in the training of large language models ~\citep{deepseekai2025deepseekr1incentivizingreasoningcapability}. Supervised fine-tuning (SFT) cold-start plays an important role in preparing the model for subsequent RL, as SFT establishes a reliable initial policy which can then be further refined by RL objectives. Therefore, it is essential to verify whether models trained with \ours \ can maintain their advantages during subsequent RL training. To this end, we fine-tune models on the s1K dataset using both \ours \ and SFT, and then further train them with RL on the Sudoku dataset for comparison.  

As shown in Figure~\ref{fig:rl}, models initialized with \ours \ achieve higher rewards throughout RL training. Moreover, as reported in Table~\ref{tab:rl}, the \ours-trained model achieves significantly better evaluation results. These findings demonstrate that models cold-started with \ours \ are effective in guiding subsequent RL learning.  

\begin{table*}[t]
\centering
\begin{tabular}{lccccc}
\toprule
 & \textbf{Base} &\textbf{ SFT }& \textbf{\ours} & \textbf{SFT+RL} & \textbf{\ours+RL} \\
\midrule
Sudoku & 5.5 & 4.6 & \textbf{7.8} & 8.9 & \textbf{13.3} \\
\bottomrule
\end{tabular}
\caption{Evaluation results of models cold-started with \ours{} and SFT during subsequent RL training.}
\label{tab:rl}
\end{table*}

\section{Empirical Necessity of Applying the Square Root of Entropy During Training}
\label{exp:ablation_entropy}

\begin{table}[H]
\centering
\begin{tabular*}{1\linewidth}{l@{\extracolsep{\fill}}cc}
\toprule
\textbf{Gradient Norm} & \textbf{Entropy} & \textbf{$\sqrt{\text{Entropy}}$}  \\ 
\midrule
Max  &  646.1 &  46.2  \\
Median & 0.47 & 0.43\\
Average &  2.02 &  1.03  \\
\bottomrule
\end{tabular*}
\caption{Comparison of gradient norms during training when using entropy versus square root entropy. Using entropy sometimes leads to extremely large gradient norms, resulting in instability, whereas using square root entropy effectively alleviates this issue.}
\label{tab:norm}
\end{table}

Below, we explain why we choose the square root of entropy rather than entropy as the metric. As shown in Table \ref{tab:norm}, we observed that directly using token weights based on raw entropy leads to large variations across tokens, which can significantly reduce training stability and even cause oscillations. To address this issue, we implement \ours \ using the square root of entropy as the weighting factor. This modification effectively reduces the scale of the gradient norms, resulting in more stable training and improved overall convergence behavior.

\section{Evaluation with Few-Shot Learning}
\label{app:few}
To further strengthen our empirical results and address the assessment of in-context learning capabilities, we include performance metrics for 5-shot evaluation. As shown in Table~\ref{tab:five_shot}, GIFT consistently demonstrates robust performance in the few-shot setting across different datasets.

\begin{table*}[h]
\centering
\begin{tabular*}{\textwidth}{l@{\extracolsep{\fill}}ccccc}
\toprule
\multirow{2}{*}{\textbf{Model}} & 
\textbf{Sudoku}    & \textbf{Countdown} & \textbf{GSM8K} & \textbf{MATH500} & \multirow{2}{*}{\textbf{Average}} \\
& \textit{5-shot} & \textit{5-shot}   & \textit{5-shot} & \textit{5-shot} & \\
\midrule
\multicolumn{6}{l}{\textbf{s1K Dataset, LoRA Tuning}} \\
\midrule
Instruct Model + SFT & $6.0 \pm 0.2$ & $23.2 \pm 1.0$ & \textbf{78.9 $\pm$ 0.3} & $31.8 \pm 0.6$ & $35.0 \pm 0.3$ \\
Instruct Model + \ours  & \textbf{7.9 $\pm$ 0.5} & \textbf{27.5 $\pm$ 1.6} & $76.7 \pm 0.2$ & \textbf{33.0 $\pm$ 0.7} & \textbf{36.3 $\pm$ 0.5} \\
\midrule
\multicolumn{6}{l}{\textbf{s1.1K Dataset, LoRA Tuning}} \\
\midrule
Instruct Model + SFT & \textbf{23.3 $\pm$ 0.3} & $21.7 \pm 1.2$ & \textbf{78.7 $\pm$ 0.1} & $30.0 \pm 0.5$ & $38.4 \pm 0.3$ \\
Instruct Model + \ours  & $20.1 \pm 0.8$ & \textbf{26.0 $\pm$ 1.2} & $78.2 \pm 0.1$ & \textbf{31.4 $\pm$ 0.6} & \textbf{38.9 $\pm$ 0.4} \\
\bottomrule
\end{tabular*}
\caption{5-shot evaluation results on Sudoku, Countdown, GSM8K, and MATH500 benchmarks. \ours \ consistently shows robust performance in few-shot scenarios.}
\label{tab:five_shot}
\end{table*}

\section{Discussion on the Rigor of the Theoretically Derived Loss}
\label{app:rigor}
Our theoretical derivation relies on certain necessary simplifying assumptions. Specifically, we assume a fixed matrix $Q$ during the derivation; however, in practical entropy-driven masking, $Q$ functions more as a heuristic variance-control scheme. 

To maintain theoretical rigor, we can also construct a training method that strictly adheres to our theoretical derivation by consistently using the initial, non-finetuned model as a reference model to calculate the importance weight $\beta$. In this setting, the transition matrix $Q$ remains fixed during training, making our theoretical derivation fully applicable. We denote this strictly theoretically-aligned version as GIFT*. We compare the performance of SFT and GIFT* using LoRA fine-tuning on the s1K dataset:

\begin{table*}[t]
\centering
\begin{tabular}{lccccc}
\toprule
 & \textbf{Sudoku} & \textbf{Countdown} & \textbf{GSM8K} & \textbf{MATH500} & \textbf{Average} \\
\midrule
SFT & $4.2 \pm 0.4$ & $21.1 \pm 1.4$ & \textbf{78.5 $\pm$ 0.2} & $32.6 \pm 0.9$ & $34.1 \pm 0.4$ \\
GIFT* & \textbf{5.4 $\pm$ 0.4} & \textbf{24.3 $\pm$ 0.3} & $77.2 \pm 0.3$ & \textbf{32.8 $\pm$ 0.4} & \textbf{34.9 $\pm$ 0.2} \\
\bottomrule
\end{tabular}
\caption{Comparison between SFT and the strictly theoretically-aligned GIFT* on the s1K dataset. Even with a fixed reference model, GIFT* consistently outperforms SFT across most tasks.}
\label{tab:gift_star_comparison}
\end{table*}

As shown in Table \ref{tab:gift_star_comparison}, the theoretically-aligned GIFT* method still achieves statistically significant improvements over SFT. One primary reason we choose to calculate the importance weight $\beta$ using the model being trained, rather than a fixed reference model is efficiency. This approach only requires loading the active model into GPU memory without the additional overhead of a reference model.

\section{Discussion on the Connection to Curriculum Learning}
\label{app:curriculum_learning}

Curriculum Learning ~\citep{soviany2022curriculumlearningsurvey} shares a similar high-level intuition with our approach, as both aim to improve training effectiveness by allocating different importance or tasks to the model at various stages of training. However, our method, \ours, differs from traditional Curriculum Learning in two fundamental aspects.

\paragraph{Granularity and Model-Adaptivity.} In Curriculum Learning, the difficulty of a problem is typically defined at the task or dataset level, and these difficulty measures are often static and independent of the specific model state. In contrast, the entropy-based weighting we employ reflects the difficulty of each individual token with respect to the current model state. This ensures that the training signal is inherently model-adaptive.
    
\paragraph{Training Paradigm.} Curriculum Learning generally follows an easy-to-hard paradigm, where the model is first exposed to simpler tasks before gradually transitioning to more complex ones. Conversely, \ours\ consistently focuses on the most challenging portions of the data (those that the current model finds most uncertain or confusing) to maximize learning efficiency.

\end{document}